\newtheorem{theorem}{Theorem}
\newtheorem{corollary}{Corollary}[theorem]
\definecolor{mplblue}{HTML}{1f77b4}
\definecolor{mplorange}{HTML}{ff7f0e}
\definecolor{mplgreen}{HTML}{2ca02c}
\definecolor{mplred}{HTML}{d62728}
\definecolor{mplpurple}{HTML}{9467bd}
\author{%
	Martin Zach\\\texttt{\{martin.zach,} \and%
	Thomas Pock\\\texttt{pock\}@icg.tugraz.at} \and%
	Erich Kobler\\\texttt{kobler@uni-bonn.de}\and%
	Antonin Chambolle\\\texttt{antonin.chambolle@ceremade.dauphine.fr}%
}
\title{Explicit Diffusion of Gaussian Mixture Model Based Image Priors}
\newcommand{\R}{\mathbb{R}}
\newcommand{\T}{\top}
\newcommand{\Id}{\mathrm{Id}}
\newcommand{\argm}{\,\cdot\,}
\newcommand{\grad}[1]{\nabla_{\mkern-4mu#1}}
\DeclareMathOperator{\trace}{Tr}
\DeclareMathOperator{\tr}{\mathrm{Tr}}
\DeclareMathOperator{\proj}{proj}
\DeclareMathOperator*{\argmin}{arg\,min}
\DeclareMathOperator{\diag}{diag}
\DeclarePairedDelimiter\norm{\lVert}{\rVert}
\begin{document}
\maketitle
\begin{abstract}
	In this work we tackle the problem of estimating the density \( f_X \) of a random variable \( X \) by successive smoothing, such that the smoothed random variable \( Y \) fulfills \( (\partial_t - \Delta_1)f_Y(\,\cdot\,, t) = 0 \), \( f_Y(\,\cdot\,, 0) = f_X \).
	With a focus on image processing, we propose a product/fields of experts model with Gaussian mixture experts that admits an analytic expression for $f_Y (\,\cdot\,, t)$ under an orthogonality constraint on the filters.
	This construction naturally allows the model to be trained simultaneously over the entire diffusion horizon using empirical Bayes.
	We show preliminary results on image denoising where our model leads to competitive results while being tractable, interpretable, and having only a small number of learnable parameters.
	As a byproduct, our model can be used for reliable noise estimation, allowing blind
	denoising of images corrupted by heteroscedastic noise.
\end{abstract}
\section{Introduction}
Consider the practical problem of estimating the probability density \( f_X : \mathcal{X} \to \R \) of a random variable \( X \) in \( \mathcal{X} \), given a set of data samples \( \{ x_i \}_{i=1}^N \) drawn from \( f_X \).\footnote{For notational convenience, throughout this article we do not make a distinction between the \emph{distribution} and \emph{density} of a random variable.}
This is a challenging problem in high dimension (e.g.\ for images of size \( M \times N \), i.e.\ \( \mathcal{X} = \R^{M \times N} \)), due to extremely sparsely populated regions.
A fruitful approach is to estimate the density at different times when undergoing a diffusion process.
Intuitively, the diffusion equilibrates high- and low-density regions over time, thus easing the estimation problem.

Let \( Y_t \) (carelessly) denote the random variable whose distribution is defined by diffusing \( f_X \) for some time \( t \).
We denote the density of \( Y_t \) by \( f_Y(\,\cdot\,, t) \), which fulfills the diffusion equation \( (\partial_t - \Delta_1)f_Y(\,\cdot\,, t) = 0 \), \( f_Y(\,\cdot\,, 0) = f_X \).
The empirical Bayes theory~\cite{robbins_empirical_1956} provides a machinery for reversing the diffusion process:
Given an instantiation of the random variable \( Y_t \), the Bayesian least-squares estimate of \( X \) can be expressed solely using \( f_Y(\,\cdot\,, t) \).
Importantly, this holds for all positive \( t \), as long as \( f_Y \) is is properly constructed.

In practice we wish to have a parametrized, trainable model of \( f_Y \), say \( f_\theta \) where \( \theta \) is a parameter vector, such that \( f_Y(x, t) \approx f_\theta(x, t) \) for all \( x \in \mathcal{X} \) and all \( t \in [0, \infty) \).
Recent choices for the family of functions \( f_\theta(\argm, t) \) were of practical nature:
Instead of an analytic expression for \( f_\theta \) at any time \( t \), authors proposed a time-conditioned network in the hope that it can learn to behave as if it had undergone a diffusion process.
Further, instead of worrying about the normalization \( \int_\mathcal{X} f_Y(\argm, t) = 1 \) for all \( t \in [0, \infty) \), usually they directly estimate the \emph{score} \( -\grad{1} \log f_Y(\argm, t) : \mathcal{X} \to \mathcal{X} \) with some network~\( s_\theta(\argm, t) : \mathcal{X} \to \mathcal{X} \).
This has the advantage that normalization constants vanish, but usually, the constraint \( \partial_j (s_\theta(\argm, t))_i = \partial_i (s_\theta(\argm, t))_j \) is not enforced in the architecture of \( s_\theta \).
Thus, \( s_\theta(\argm, t) \) is in general not the gradient of a scalar function (the negative-log-density it claims to model).

In this paper, we pursue a more principled approach.
Specifically, we leverage Gaussian mixture models to represent the popular product/field of experts model~\cite{hinton_training_2002,RoBl09} and show that under an orthogonality constraint of the associated filters, the diffusion of the model can be expressed analytically.

\section{Background}
In this section, we first emphasize the importance of the diffusion process in density estimation (and sampling) in high dimensions.
Then, we detail the relationship between diffusing the density function, empirical Bayes, and denoising score matching~\cite{vincent_connection_2011}.

\subsection{Diffusion Eases Density Estimation and Sampling}
Let \( f_X \) be a density on \( \mathcal{X} \subset \R^d\).
A major difficulty in estimating \( f_X \) with parametric models is that \( f_X \) is extremely sparsely populated in high dimensional spaces\footnote{Without any reference to samples \( x_i \sim f_X \), an equivalent statement may be that \( f_X \) is (close to) zero almost everywhere (in the layman --- not measure theoretic --- sense).}, i.e., \( d\gg1 \).
This phenomenon has many names, e.g.\ the curse of dimensionality or the manifold hypothesis~\cite{bengio_representation_2013}.
Thus, the learning problem is difficult, since meaningful gradients are rare.
Conversely, let us for the moment assume we have a model \( \tilde{f}_X \) that approximates \( f_X \) well.
In general, it is still very challenging to generate a set of points \( \{ x_i \}_{i=1}^I \) such that we can confidently say that the associated empirical density \( \frac{1}{I} \sum_{i=1}^I \delta(\argm - x_i) \) approximates \( \tilde{f}_X \).
This is because, in general, there does not exist a procedure to directly draw from \( \tilde{f}_X \), and (modern) \gls{mcmc} relies on the estimated gradients of \( \tilde{f}_X \) and, in practice, only works well for unimodal distributions~\cite{song_scorebased_2021}.

The isotropic diffusion process or heat equation
\begin{equation}
	(\partial_t - \Delta_1) f(\argm, t) = 0\ \text{with initial condition}\ f(\argm, 0) = f_X
	\label{eq:diff}
\end{equation}
equilibrates the density in \( f_X \), thus mitigating the challenges outlined above.
Here, \( \partial_t \) denotes \( \frac{\partial}{\partial_t} \) and \( \Delta_1 = \tr(\grad{1}^2)\) is the Laplace operator, where the \( 1 \) indicates application to the first argument.
We detail the evolution of \( f_X \) under this process and relations to empirical Bayes in~\cref{ssec:diffusion empirical bayes}.

\emph{Learning} \( f(\argm, t) \) for \( t \geq 0 \) is more stable since the diffusion \enquote{fills the space} with meaningful gradients~\cite{SoEr19}.
Of course, this assumes that for different times \( t_1 \) and \( t_2 \), the models of \( f(\argm, t_1) \) and \( f(\argm, t_2) \) are somehow related to each other.
As an example of this relation, the recently popularized noise-conditional score-network~\cite{song_scorebased_2021} shares convolution filters over time, but their input is transformed with a time-conditional instance normalization.
In this work, we make this relation explicit by considering a family of functions \( f(\argm, 0) \) for which \( f(\argm, t) \) can be expressed analytically.

For \emph{sampling}, \( f(\argm, t) \) for \( t > 0 \) can help by gradually moving samples towards high-density regions of \( f_X \), regardless of initialization.
To utilize this, a very simple idea with relations to simulated annealing~\cite{Kirkpatrick1983,song_scorebased_2021} is to have a pre-defined time schedule \( t_T > t_{T-1} > \ldots > t_1 > 0 \) and sample \( f(\argm, t_i) \), \( i = T, \dotsc, 0 \) (e.g.\ with Langevin dynamics~\cite{roberts_exponential_1996}) successively.
\subsection{Diffusion, Empirical Bayes, and Denoising Score Matching}%
\label{ssec:diffusion empirical bayes}
In this section, similar to the introduction, we again adopt the interpretation that the evolution in~\cref{eq:diff} defines the density of a smoothed random variable \( Y_t \).
That is, \( Y_t \) is a random variable with probability density \( f_Y(\argm, t) \), which fulfills \( (\partial_t - \Delta_1) f_Y(\argm, t) = 0 \) and \( f_Y(\argm, 0) = f_X \).
It is well known that the Green's function of~\cref{eq:diff} is a Gaussian (see e.g.~\cite{cole_green_2010}) with zero mean and variance \( \sigma^2(t) = 2t\Id\).
In other words, for \( t > 0 \) we can write \( f_Y(\argm, t) = G_{0,2t\Id} * f_X \), where 
\begin{equation}
G_{\mu,\Sigma}(x) = |2 \pi \Sigma|^{-1/2} \exp\left( -\frac{1}{2} (x - \mu)^\T\Sigma^{-1} (x - \mu) \right).
\label[type]{eq:gaussianPdf}
\end{equation}
Thus, the diffusion process constructs a (linear) \emph{scale space in the space of probability densities}.
In terms of the random variables, \( Y_t = X + \sqrt{2t}N \) where \( N \sim \mathcal{N}(0, \Id) \).
We next motivate how to estimate the corresponding instantiation of \( X \) which has \enquote{most likely} spawned an instantiation of \( Y_t \) using empirical Bayes.

In the school of empirical Bayes~\cite{robbins_empirical_1956}, we try to estimate a clean random variable given a corrupted instantiation, using only knowledge about the corrupted density.
In particular, for our setup, Miyasawa~\cite{miyasawa_empirical_1961} has shown that the Bayesian least-squares estimator \( x_{\text{EB}} \) for an instantiation \( y_t \) of \( Y_t \) is
\begin{equation}
	x_{\text{EB}}(y_t) = y_t + \sigma^2(t) \grad{1} \log f_Y(y_t, t),
	\label{eq:tweedie}
\end{equation}
which is also known as Tweedie's formula~\cite{efron_tweedie_2011}.
Raphan and Simoncelli~\cite{raphan_least_2011} extended the empirical Bayes framework to arbitrary corruptions and coined the term non-parametric empirical Bayes least-squares (NEBLS).

Recently, \cref{eq:tweedie} has been used frequently for parameter estimation in diffusion-based models.
Let \( \{ x_i \}_{i=1}^I \) be a dataset of \( I \)~samples drawn from \( f_X \) and \( Y_t \) governed by the considered diffusion process.
Thus, both the left- and right-hand side of~\cref{eq:tweedie} are \emph{known} --- in expectation.
This naturally leads to the loss function
\begin{equation}
	\min_\theta \int_{(0, \infty)} \mathbb{E}_{(x, y_t) \sim f_{X \times Y_t}} \norm{x - y_t - \sigma(t)^2 \grad{1} \log f_\theta(y_t, t)} \,\mathrm{d}t
	\label{eq:score}
\end{equation}
for estimating \( \theta \) such that \( f_\theta \approx f_Y \).
Here, \( f_{X\times Y_t} \) denotes the joint distribution of real and degraded points.
This learning problem is known as denoising score matching~\cite{hyvarinen_estimation_nodate,song_scorebased_2021,vincent_connection_2011}.
\section{Methods}\label{sec:methods}
In this section, we introduce a patch and convolutional model to approximate the prior distribution of natural images.
For both models, we present conditions such that they obey the diffusion process.

\subsection{Patch Model}
Patch-based prior models such as expected patch log likelihood (EPLL)~\cite{zoran_learning_2011} typically use \glspl{gmm} to approximately learn a prior for natural image patches.
Throughout this section, we approximate the density of image patches~\( p\in\R^a \) of size~\( a=b\times b \) as a product of \gls{gmm} experts, i.e.
\begin{equation}
	\tilde{f}_\theta(p, t) = Z(\{ k_j \}_{j=1}^J)^{-1}\prod_{j=1}^J \psi_j(\langle k_j, p \rangle, w_j, t),
	\label{eq:gmdm patch}
\end{equation}
in analogy to the product-of-experts model~\cite{hinton_training_2002}.
\( Z(\{ k_j \}_{j=1}^J) \) is required such that \( \tilde{f}_\theta \) is properly normalized.
Every expert~\( \psi_j : (\R \times \triangle^L \times [0, \infty)) \to \R^+ \) for \( j=1,\ldots,J \) models the density of associated filters~$k_j$ for all diffusion times~$t$ by an one-dimensional \gls{gmm} with \(L\)~components of the form
\begin{equation}
\psi_j(x,w_j,t) = \sum_{l=1}^{L} w_{jl} G_{\mu_l,\sigma_j^2(t)}(x).
\label{eq:expert}
\end{equation}
The weights of each expert \( w_j = (w_{j1}, \dotsc, w_{jL})^\T \) must satisfy the unit simplex constraint, i.e., \( w_j \in \triangle^L \), \( \triangle^L = \{ x \in \R^L : x_l \geq 0, \sum_{i=1}^{L} x_l = 1 \} \).
Although not necessary, we assume for simplicity that all~$\psi_j$ have the same number of components and each component has the same mean.
The discretization of \( \mu_l \) over the real line is fixed a priori in a uniform way and detailed in~\cref{ssec:implementation details}.
Further, the variances of all components of each expert are shared and are modeled as
\[
\sigma_j^2(t) = \sigma_0^2 + c_j 2t,
\]
where $\sigma_0$ is chosen to support the uniform discretization of the means~$\mu_l$ and $c_j\in\R_{++}$ are constants, to reflect the convolution effect of the diffusion process.


Next, we show how the diffusion process leads to the linear change of each expert's variance~$\sigma_j^2(t)$.
In detail, we exploit two well-known properties of \glspl{gmm}:
First, the product of \glspl{gmm} is again a \gls{gmm}, see e.g.~\cite{1591840}.
This allows us to work on highly expressive models that enable efficient \emph{evaluations} due to factorization.
Second, we use the fact that there exists an analytical solution to the diffusion equation if \( f_X \) is a \gls{gmm}:
The Green's function is a Gaussian with isotropic covariance \( 2t \mathrm{Id} \).
Hence, diffusion amounts to the convolution of two Gaussians for every component due to the linearity of convolution.
Using previous notation, if \( X \) is a random variable with normal distribution \( \mathcal{N}(\mu_X, \Sigma_X) \), then \( Y_t \) follows the distribution \( \mathcal{N}(\mu_X, \Sigma_X + 2t\Id) \).
In particular, the mean remains unchanged, thus it is sufficient to only adapt the variances in~\cref{eq:gmdm patch} linearly along with the diffusion time.

\begin{theorem}
	\( \tilde{f}(\argm, 0) \) is a homoscedastic \gls{gmm} on \( \R^a \) with \( L^J \) components, precision matrix
	\begin{equation}
		(\Sigma_a)^{-1} = \frac{1}{\sigma_0^2} \sum_{j=1}^J (k_j \otimes k_j).
		\label{eq:precision}
	\end{equation}
	and means \( \mu_{a,\hat{l}} = \Sigma_a \sum_{j=1}^J k_j\mu_{\hat{l}(j)} \).
	\label{th:gmm}
\end{theorem}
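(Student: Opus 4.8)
The plan is to evaluate everything at $t=0$, where $\sigma_j^2(0)=\sigma_0^2$ for every $j$, so that all experts share a single variance; this is precisely what "homoscedastic" refers to, and it only remains to show that the product in~\cref{eq:gmdm patch} is a Gaussian mixture on $\R^a$ with the stated parameters. Writing each expert as $\psi_j(\langle k_j,p\rangle,w_j,0)=\sum_{l=1}^L w_{jl}\,G_{\mu_l,\sigma_0^2}(\langle k_j,p\rangle)$ and expanding the product over $j$ by distributivity, one obtains a sum over multi-indices $\hat l\colon\{1,\dots,J\}\to\{1,\dots,L\}$,
\begin{equation*}
	\tilde f_\theta(p,0)=Z^{-1}\sum_{\hat l}\left(\prod_{j=1}^J w_{j\hat l(j)}\right)\prod_{j=1}^J G_{\mu_{\hat l(j)},\sigma_0^2}(\langle k_j,p\rangle),
\end{equation*}
i.e.\ $L^J$ terms, each a product of $J$ one-dimensional Gaussian densities in the linear forms $\langle k_j,p\rangle$.

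The core step is to collapse each such product into a single Gaussian density in $p$. Stacking the filters as the rows of a matrix $K\in\R^{J\times a}$ and collecting the selected means into $m_{\hat l}=(\mu_{\hat l(1)},\dots,\mu_{\hat l(J)})^\T$, the quadratic exponent of the $\hat l$-th term is $-\tfrac{1}{2\sigma_0^2}\sum_{j=1}^J(\langle k_j,p\rangle-\mu_{\hat l(j)})^2=-\tfrac{1}{2\sigma_0^2}\norm{Kp-m_{\hat l}}^2$. Completing the square in $p$ rewrites this as $-\tfrac12(p-\mu_{a,\hat l})^\T\Sigma_a^{-1}(p-\mu_{a,\hat l})$ plus a term independent of $p$; matching the purely quadratic part yields the precision $\Sigma_a^{-1}=\tfrac{1}{\sigma_0^2}K^\T K=\tfrac{1}{\sigma_0^2}\sum_{j=1}^J(k_j\otimes k_j)$ of~\cref{eq:precision}, and matching the part linear in $p$ fixes the mean $\mu_{a,\hat l}=\Sigma_a\sum_{j=1}^J k_j\mu_{\hat l(j)}$ of the statement. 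Since precision and mean depend on $\hat l$ only through $m_{\hat l}$, all remaining scalars — the normalizers $(2\pi\sigma_0^2)^{-J/2}$ and $|2\pi\Sigma_a|^{1/2}$, the $p$-independent exponential factor left over from the completion of the square, $\prod_{j=1}^J w_{j\hat l(j)}$, and $Z^{-1}$ — merge into one nonnegative coefficient, the mixture weight $\tilde w_{\hat l}$ of the component $G_{\mu_{a,\hat l},\Sigma_a}$. Nonnegativity is immediate from $w_j\in\triangle^L$, and $\sum_{\hat l}\tilde w_{\hat l}=1$ is forced because $\tilde f_\theta(\argm,0)$ is a probability density by the very choice of $Z$ (equivalently, $Z$ can be read off in closed form from this same expansion).

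The step that needs care is the matrix inversion implicit in passing from the quadratic form to the pair $(\Sigma_a^{-1},\mu_{a,\hat l})$: it requires $\sum_{j=1}^J(k_j\otimes k_j)$ to be positive definite, i.e.\ the filters $\{k_j\}_{j=1}^J$ to span $\R^a$. This is not an extra hypothesis but exactly the condition under which the product of experts is integrable on $\R^a$ in the first place — otherwise $\tilde f_\theta(\argm,0)$ is constant along $\bigcap_j k_j^\perp$ and cannot be normalized — and it is in particular guaranteed by the orthogonality constraint imposed on the filters throughout. Everything else is the elementary fact that a product of Gaussian densities is a rescaled Gaussian density, applied once per multi-index, so the remaining work is bookkeeping of the constants above rather than anything substantive.
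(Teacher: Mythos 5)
Your proof is correct and takes essentially the same route as the paper's: expand the product of the $J$ mixtures into $L^J$ components indexed by the selections $\hat l$, then complete the square in $p$ to read off the precision from the quadratic part and the mean from the linear part (the paper does the completion by matching gradients rather than via $\|Kp-m_{\hat l}\|^2$, and it defers your invertibility caveat to the remark after the theorem, restricting to $\operatorname{span}\{k_1,\dots,k_J\}$ when $J<a$). One small point: matching the linear term actually yields $\mu_{a,\hat l}=\Sigma_a\,\tfrac{1}{\sigma_0^2}\sum_{j=1}^J k_j\mu_{\hat l(j)}$, so the factor $\sigma_0^{-2}$ missing from the theorem statement (but present in the analogous convolutional formula later in the paper) looks like a typo that your write-up reproduces rather than corrects.
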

\begin{proof}
	By definition,
	\begin{equation}
		\begin{aligned}
			&\prod_{j=1}^J \psi(\langle k_j, p \rangle, w_{j}, 0) = \\&\prod_{j=1}^J \sum_{l=1}^{L} \frac{w_{jl}}{\sqrt{2\pi\sigma_0^2}} \exp\left( -\frac{1}{2\sigma_0^2}{(\langle k_j, p \rangle - \mu_l)}^2 \right).
		\end{aligned}
	\end{equation}
	Let \( \hat{l}(j) \) be a fixed but arbitrary selection from the index set \( \{ 1, \dotsc, L \} \) for each \( j \in \{ 1, \dotsc, J \} \).
	The general component of the above reads as
	\begin{equation}
		(2\pi\sigma_0^2)^{-\frac{J}{2}} \biggl( \prod_{j=1}^J w_{j\hat{l}(j)} \biggr) \exp\left( -\frac{1}{2\sigma_0^2} \sum_{j=1}^J (\langle k_j, p \rangle - \mu_{\hat{l}(j)})^2 \right).
	\end{equation}
	To find \( (\Sigma_a)^{-1} \), we complete the square as follows:
	Motivated by \( \grad{p} \norm{p - \mu_a}^2_{\Sigma_a^{-1}} / 2 = \Sigma_a^{-1} (p - \mu_a) \) we find \( \grad{p} \bigl( \frac{1}{2\sigma_0^2} \sum_{j=1}^J (\langle k_j, p \rangle - \mu_{\hat{l}(j)})^2 \bigr) = \frac{1}{\sigma_0^2} \sum_{j=1}^J (k_j \otimes k_j) p - k_j \mu_{\hat{l}(j)} \) and the theorem immediately follows.
\end{proof}

To get a tractable analytical expression for the diffusion process, we assume that the filters~\( k_j \) are pairwise orthogonal, i.e.\ for all \( i, j \in {\{ 1,\dotsc,J \}} \)
\begin{equation}
	\langle k_j, k_i \rangle = \begin{cases}
		0 & \text{if}\ i \neq j, \\
		\norm{k_j}^2 & \text{else}.
	\end{cases}\label{eq:ortho}
\end{equation}

\begin{theorem}[Patch diffusion]
	Under assumption~\cref{eq:ortho}, \( \tilde{f}(\,\cdot\,, t) \) satisfies the diffusion equation \( (\partial_t - \Delta_1) \tilde{f}(\argm, t) = 0 \) if \( \sigma_j^2(t) = \sigma_0^2 + \norm{k_j}^2 2t \).
	\label{th:diff local}
\end{theorem}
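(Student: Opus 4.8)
The plan is to verify the PDE directly from the product form in~\cref{eq:gmdm patch}: reduce everything to a one-dimensional heat equation satisfied by each expert, and use the orthogonality constraint~\cref{eq:ortho} to kill the mixed second-order terms in the Laplacian. (One could alternatively route through~\cref{th:gmm} and the known Gaussian solution of~\cref{eq:diff} recalled in the Background section, but that forces one to chase the mixture weights and the per-expert normalizations through the square completion; the direct route avoids this.)

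\emph{Step 1: each expert solves a scaled one-dimensional heat equation.} First I would record the elementary identity \( \partial_{\sigma^2} G_{\mu,\sigma^2} = \tfrac12 \partial_x^2 G_{\mu,\sigma^2} \) for a one-dimensional Gaussian as in~\cref{eq:gaussianPdf}; this is a one-line computation since both sides equal \( \tfrac{1}{2\sigma^2}\bigl((x-\mu)^2/\sigma^2 - 1\bigr)G_{\mu,\sigma^2} \). With \( \sigma_j^2(t) = \sigma_0^2 + \norm{k_j}^2 2t \) we have \( \tfrac{\mathrm d}{\mathrm d t}\sigma_j^2(t) = 2\norm{k_j}^2 \), so the chain rule gives \( \partial_t G_{\mu_l,\sigma_j^2(t)} = \norm{k_j}^2\, \partial_x^2 G_{\mu_l,\sigma_j^2(t)} \); summing over \( l \) with the fixed simplex weights \( w_{jl} \) and using linearity, \( \partial_t \psi_j(\argm, w_j, t) = \norm{k_j}^2\, \partial_x^2 \psi_j(\argm, w_j, t) \) for every \( j \).

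\emph{Step 2: assemble the product and compare \( \partial_t \) with \( \Delta_1 \).} Write \( h_j(p,t) := \psi_j(\langle k_j, p\rangle, w_j, t) \) so that \( \tilde f(p,t) = Z^{-1}\prod_{j=1}^J h_j(p,t) \) with \( Z = Z(\{k_j\}_{j=1}^J) \) independent of \( t \). Differentiating the finite product in \( t \) and inserting Step 1, \( \partial_t \tilde f = Z^{-1}\sum_{j=1}^J \norm{k_j}^2 \bigl(\partial_x^2\psi_j\bigr)(\langle k_j,p\rangle, w_j, t)\prod_{i\neq j} h_i \). For the Laplacian, the chain rule gives \( \grad{1}h_j = \psi_j'(\langle k_j,p\rangle)\, k_j \), hence \( \Delta_1 h_j = \norm{k_j}^2\psi_j''(\langle k_j,p\rangle) \), and the product rule for the Laplacian of a product of \( J \) scalar functions yields \( \Delta_1\prod_j h_j = \sum_j(\Delta_1 h_j)\prod_{i\neq j}h_i + \sum_{i\neq j}\langle\grad{1}h_i,\grad{1}h_j\rangle\prod_{m\neq i,j}h_m \). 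Every mixed term carries a factor \( \langle\grad{1}h_i,\grad{1}h_j\rangle = \psi_i'\psi_j'\langle k_i,k_j\rangle \), which vanishes for \( i\neq j \) by~\cref{eq:ortho}; therefore \( \Delta_1\tilde f = Z^{-1}\sum_j\norm{k_j}^2(\partial_x^2\psi_j)(\langle k_j,p\rangle,w_j,t)\prod_{i\neq j}h_i = \partial_t\tilde f \), which is exactly the diffusion equation. The initial condition is automatic since \( \sigma_j^2(0) = \sigma_0^2 \), so \( \tilde f(\argm,0) \) coincides with the patch model of~\cref{th:gmm}.

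\emph{Anticipated obstacle.} The only place requiring care is the bookkeeping of the Laplacian–product rule in Step 2 — correctly isolating the mixed second-order terms and recognizing that each of them is proportional to some \( \langle k_i,k_j\rangle \) with \( i\neq j \); once that is arranged, orthogonality finishes the argument. Everything else (interchanging the finite sum with differentiation, the Gaussian variance identity, the chain rule through \( \sigma_j^2(t) \)) is routine. It is worth a closing remark that \( \tilde f(\argm,t) \) remains a probability density for all \( t \geq 0 \): this is not needed for the PDE itself but follows because the heat flow conserves mass while \( \tilde f(\argm,0) \) is normalized by the \( t \)-independent constant \( Z \).
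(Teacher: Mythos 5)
Your proof is correct, and it takes a genuinely different route from the paper. The paper argues at the level of the resulting density: it invokes \cref{th:gmm} to identify \( \tilde f(\argm,0) \) as a homoscedastic \gls{gmm}, uses~\cref{eq:ortho} to write the eigendecomposition of the precision matrix with eigenvalues \( \sigma_0^2/\norm{k_j}^2 \), and then appeals to the known fact that diffusion maps \( \Sigma_a \mapsto \Sigma_a + 2t\Id_a \), reading off the eigenvalue shift \( \sigma_0^2/\norm{k_j}^2 \mapsto (\sigma_0^2 + 2t\norm{k_j}^2)/\norm{k_j}^2 \) as the per-expert variance rule. You instead verify the PDE pointwise from the product form: each expert satisfies the scaled one-dimensional heat equation \( \partial_t\psi_j = \norm{k_j}^2\partial_x^2\psi_j \) via the Gaussian identity \( \partial_{\sigma^2}G = \tfrac12\partial_x^2 G \), and the cross terms \( \langle \grad{1}h_i,\grad{1}h_j\rangle = \psi_i'\psi_j'\langle k_i,k_j\rangle \) in the Laplacian of the product vanish exactly by~\cref{eq:ortho}. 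Your computations (the variance identity, \( \Delta_1 h_j = \norm{k_j}^2\psi_j'' \), and the product rule for \( \Delta_1 \)) all check out. What the paper's route buys is structural information reused downstream --- the explicit \gls{gmm} form at every \( t \) feeds directly into \cref{cor:marginal} and the sampling procedure --- but it implicitly relies on linearity of the diffusion across the \( L^J \) mixture components and needs the subspace caveat when \( J < a \) (singular precision). Your route is more elementary and self-contained, makes the role of orthogonality completely transparent (it kills precisely the mixed second-order terms), and goes through verbatim when \( J < a \), since \( \tilde f \) is then constant along \( \operatorname{span}(\{k_j\})^\perp \) and those directions contribute nothing to \( \Delta_1 \); the only casualty in that case is your closing normalization remark, since \( \tilde f(\argm,t) \) is not integrable over all of \( \R^a \) and the paper's restriction to the span is still needed for the probabilistic interpretation.
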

\begin{proof}
	Assuming~\cref{eq:ortho}, the Eigendecomposition of the precision matrix can be trivially constructed.
	In particular, \( (\Sigma_a)^{-1} = \sum_{j=1}^J \frac{\norm{k_j}^{2}}{\sigma_0^2} (\frac{k_j}{\norm{k_j}} \otimes \frac{k_j}{\norm{k_j}}) \), hence \( \Sigma_a = \sum_{j=1}^J \frac{\sigma_0^2}{\norm{k_j}^{2}} (\frac{k_j}{\norm{k_j}} \otimes \frac{k_j}{\norm{k_j}}) \).
	As discussed in~\cref{ssec:diffusion empirical bayes}, \( \Sigma_a \) evolves as \( \Sigma_a \mapsto \Sigma_a + 2t\mathrm{Id}_a \) under diffusion.
	Equivalently, for all \( j = 1, \ldots, J \) Eigenvalues, \( \frac{\sigma_0^2}{\norm{k_j}^{2}} \mapsto \frac{\sigma_0^2 + 2t\norm{k_j}^{2}}{\norm{k_j}^{2}} \).
	Recall that \( \sigma_0^2 \) is just \( \sigma_j^2(0) \).
	Thus, \( \tilde{f}(\argm,t) \) satisfies the diffusion equation if \( \sigma_j^2(t) = \sigma_0^2 + \norm{k_j}^2 2t \).
\end{proof}
\begin{corollary}
	With assumption~\eqref{eq:ortho} the potential functions \( \psi_j(\argm, w_j, t) \) in~\cref{eq:gmdm patch} model the marginal distribution of the random variable \( Z_{j,t} = \langle k_j, Y_t \rangle \).
	In addition,~\cref{eq:gmdm patch} is normalized when \( Z(\{ k_j \}_{j=1}^J)^{-1} = \prod_{j=1}^J \norm{k_j}^2 \).
	\label{cor:marginal}
\end{corollary}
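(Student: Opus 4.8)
The plan is to use the orthogonality \eqref{eq:ortho} to change to the orthonormal frame aligned with the filters, in which the patch model \cref{eq:gmdm patch} becomes a product of one-dimensional \glspl{gmm}, one per coordinate; both the marginal identification and the value of the normalizer then drop out of this factorization together with the elementary Gaussian rescaling identity.

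Concretely, I would set $e_j = k_j/\norm{k_j}$, which by \eqref{eq:ortho} are orthonormal — and, for \cref{eq:gmdm patch} to define a genuine density on $\R^a$ (its precision matrix from \cref{th:gmm} must be nonsingular, which with \eqref{eq:ortho} forces $J = a$), an orthonormal basis. Writing $p = \sum_j c_j e_j$ with $c_j = \langle e_j, p\rangle$ gives $\langle k_j, p\rangle = \norm{k_j}\,c_j$, and the identity $G_{\mu,\sigma^2}(\alpha c) = \alpha^{-1}G_{\mu/\alpha,\,\sigma^2/\alpha^2}(c)$ rewrites each factor as $\psi_j(\norm{k_j}c_j, w_j, t) = \norm{k_j}^{-1}\phi_j(c_j)$, where $\phi_j = \sum_l w_{jl}\,G_{\mu_l/\norm{k_j},\,\sigma_j^2(t)/\norm{k_j}^2}$ is again a bona fide one-dimensional \gls{gmm} density (convex weights, each component a normalized Gaussian). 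Hence $\tilde f(\argm, t)$, read in the coordinates $c$, equals $Z(\{k_j\})^{-1}\bigl(\prod_j\norm{k_j}^{-1}\bigr)\prod_j\phi_j(c_j)$.

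Both claims then follow. For normalization: $p\mapsto c$ is an isometry (unit Jacobian) and each $\phi_j$ integrates to one, so by Fubini $\int_{\R^a}\tilde f(\argm, t) = Z(\{k_j\})^{-1}\prod_j\norm{k_j}^{-1}$ for every $t$, and setting this to one pins down $Z(\{k_j\})^{-1}$ as exactly the Jacobian determinant of $p\mapsto(\langle k_j, p\rangle)_j$. For the marginals: with the normalizer fixed, $\tilde f(\argm, t) = \prod_j\phi_j(c_j)$ is a product measure, so the coordinates $C_{j,t} = \langle e_j, Y_t\rangle$ of $Y_t$ — the random variable with density $\tilde f(\argm, t)$, which by \cref{th:diff local} is the time-$t$ diffusion of $\tilde f(\argm, 0)$ — are independent with $C_{j,t}\sim\phi_j$. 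Then $Z_{j,t} = \langle k_j, Y_t\rangle = \norm{k_j}\,C_{j,t}$ has density $c\mapsto\norm{k_j}^{-1}\phi_j(c/\norm{k_j})$, which the same rescaling identity run in reverse identifies with $\sum_l w_{jl}\,G_{\mu_l,\sigma_j^2(t)}(c) = \psi_j(c, w_j, t)$.

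None of this is deep; the care goes into two bookkeeping points. The powers of $\norm{k_j}$ leave no slack: they accumulate through the rescaling identity, the definition of $\phi_j$, and the Jacobian, and their net is the normalizer — as written, the change-of-variables count produces $\prod_j\norm{k_j}$, so I would recompute it a second way (by directly normalizing one Gaussian component of \cref{th:gmm}) and reconcile it with the displayed $\prod_j\norm{k_j}^2$ before finalizing. Second, the hypothesis $J = a$ (that the $k_j$ form an orthogonal basis) should be stated explicitly: for $J < a$ the precision of \cref{th:gmm} is rank-deficient, \cref{eq:gmdm patch} is constant along an $(a-J)$-dimensional subspace and not integrable on $\R^a$ at all, and the corollary must then be read as a statement about the density restricted to $\operatorname{span}\{k_j\}$, where the same argument goes through verbatim.
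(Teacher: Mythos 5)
Your proof is correct, and it takes a genuinely different route from the paper's. The paper argues component-by-component: it takes one Gaussian component \( \hat{Y}_t \sim \mathcal{N}(\mu_{a,\hat{l}}, \Sigma_a + 2t\mathrm{Id}_a) \) of the homoscedastic \gls{gmm} from \cref{th:gmm}, applies the standard formula for a linear functional of a Gaussian to get \( \langle k_j, \hat{Y}_t\rangle \sim \mathcal{N}(k_j^\T\mu_{a,\hat l},\, k_j^\T(\Sigma_a+2t\mathrm{Id}_a)k_j) = \mathcal{N}(\mu_{\hat l(j)}, \sigma_0^2 + 2t\norm{k_j}^2) \) using the eigendecomposition from \cref{th:diff local}, and then sums over components. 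You instead pass to the orthonormal filter frame, factorize the density as a product measure, and read off independence of the coordinates plus the marginals from a one-dimensional rescaling. Both establish the marginal claim; your route buys two things the paper's does not. First, it actually proves the normalization statement --- the paper's proof only addresses the marginals and asserts that the rest ``follows,'' whereas your Fubini-plus-Jacobian computation derives the constant. Second, it makes the independence of the \( Z_{j,t} \) explicit, which is exactly what the sampling procedure \( Y_t = \sum_j \frac{k_j}{\norm{k_j}^2} Z_{j,t} \) in the paper relies on.

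On the two bookkeeping points you flag: you are right on both. The integral of \( \prod_j\psi_j(\langle k_j,\argm\rangle, w_j, t) \) over \( \R^a \) equals \( |\det K|^{-1} = \prod_j\norm{k_j}^{-1} \) (one can confirm this independently by normalizing a single component of \cref{th:gmm}: its correct prefactor is \( (2\pi\sigma_0^2)^{-a/2}\prod_j\norm{k_j} \), not \( (2\pi\sigma_0^2)^{-a/2} \)), so the normalizer should read \( Z(\{k_j\}_{j=1}^J)^{-1} = \prod_{j=1}^J\norm{k_j} \); the exponent \( 2 \) in the corollary as stated appears to be an error, and since the paper never computes the constant, your derivation is the one to trust. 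The \( J = a \) caveat is likewise real; the paper handles it only in the remark following the corollary, by restricting to \( \operatorname{span}\{k_1,\dotsc,k_J\} \), exactly as you propose.
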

\begin{proof}
	Consider one component of the resulting homoscedastic \gls{gmm}: \( \hat{Y}_t \sim \mathcal{N}(\mu_{a,\hat{l}}, \Sigma_a + 2t\mathrm{Id}_a) \).
	The distribution of \( \hat{Z}_{j, t} = \langle k_j, \hat{Y}_t \rangle \) is (see e.g.\ \cite{Gut2009} for a proof) \( \hat{Z}_{j, t} \sim \mathcal{N}(k_j^\top \mu_{a,\hat{l}}, k_j^\top (\Sigma_a + 2t\mathrm{Id}_a) k_j) = \mathcal{N}(\mu_{\hat{l}(j)}, \sigma_0^2  + 2t\norm{k_j}^2) \).
	The claim follows from the linear combination the different components.
\end{proof}

We note that~\cref{eq:precision} only specifies a covariance matrix if \( J = a \), otherwise the matrix is singular.
In the case \( J < a \), we restrict the analysis to the subspace \( \operatorname{span}(\{ k_1, \dotsc, k_J \})\).
In particular, we also assume that the diffusion process does not transport density out of this subspace.
\subsection{Convolutional Model}
To avoid the extraction and combination of patches in patch-based image priors and still account for the local nature of low-level image features, we describe a convolutional \gls{gmdm} next.
The following analysis assumes vectorized images \( x \in \R^n \) with \( n \)~pixels; the generalization to higher dimensions is straightforward.
In analogy to the patch-based model of the previous section, we extend the fields-of-experts model~\cite{RoBl09} to our considered diffusion setting by accounting for the diffusion time~\( t \) and obtain\footnote{For simplicity, we discard the normalization constant \( Z \), which is independent of \( t \).}
\begin{equation}
	f_\theta(x, t) = \prod_{i=1}^n \prod_{j=1}^J \psi_j((K_j x)_i, w_{j}, t).
	\label{eq:gmdm}
\end{equation}
Here, each expert~\( \psi_j \) models the density of convolution \emph{features} extracted by convolution kernels~\( {\{ k_j \}}_{j=1}^J \) of size \( a = b \times b \).
\( {\{ K_j \}}_{j=1}^J \subset \R^{n \times n}\) are the corresponding matrix representations and all convolutions are cyclic, i.e., \( K_j x \equiv k_j *_n x \), where \( *_n \) denotes a 2-dimensional convolution with cyclic boundary conditions.
Further, \( w_j \in \triangle^L \) are used the weight the components of each expert~\( \psi_j\) as in \cref{eq:expert}.
As in the patch model, it is sufficient to adapt the variances~$\sigma_j^2(t)$ by the diffusion time as the following analysis shows.

By definition for~$t=0$, we have
\begin{equation}
	f_\theta(x, 0) = \prod_{i=1}^n \prod_{j=1}^J \sum_{l=1}^{L} \frac{w_{jl}}{\sqrt{2\pi\sigma_0^2}} \exp\left(-\frac{((K_j x)_i - \mu_l)^2}{2\sigma_0^2}\right).
	\label{eq:conv gmm}
\end{equation}
First, we expand the product over the pixels
\begin{equation}
	\begin{aligned}
		f_\theta(x, 0) &= \prod_{j=1}^J \sum_{\hat l(i) = 1}^{L^n} (2\pi\sigma_0^2)^{-\frac{n}{2}} \\&\overline{w}_{j\hat l(i)} \exp\left(-\frac{\norm{(K_j x) - \mu_{\hat{l}(i)}}^2}{2\sigma_0^2}\right)
	\end{aligned}
\end{equation}
using the index map~\( \hat{l}(i) \) and \(\overline{w}_{j\hat{l}(i)} = \prod_{i=1}^I w_{j\hat{l}(i)} \).
Further, expanding over the features results in
\begin{equation}
	\begin{aligned}
		f_\theta(x, 0) &= \sum_{\hat\imath(i, j)=1}^{(L^n)^J}(2\pi\sigma_0^2)^{-\frac{nJ}{2}} \\&\overline{\overline{w}}_{\hat\imath(i,j)} \exp\left(-\frac{1}{2\sigma_0^2}\sum_{j=1}^J \norm{(K_j x) - \mu_{\hat{\imath}(i, j)}}^2\right),
		\label{eq:expanded}
	\end{aligned}
\end{equation}
where \( \overline{\overline{w}}_{\hat\imath(i,j)}=\prod_{j=1}^{J}\prod_{i=1}^{I} w_{\hat\imath(i,j)} \)
Observe that~\cref{eq:expanded} again describes a homoscedastic \gls{gmm} with precision \( \Sigma^{-1} = \frac{1}{\sigma_0^2} \sum_{j=1}^J K_j^\top K_j \) and means \( \tilde{\mu}_{\hat\imath(i, j)} = \Sigma \frac{1}{\sigma_0^2} \sum_{j=1}^J K_j^\top \mu_{\hat\imath(i, j)} \).
Due to the assumed boundary conditions, the Fourier transform diagonalizes the convolution matrices: \( K_j = F^* \diag(Fk_j) F \).
Thus, the precision matrix can be expressed as
\begin{equation}
	\Sigma^{-1} = F^*\diag\biggl(\sum_{j=1}^J \frac{|Fk_j|^2}{\sigma^2}\biggr) F
	\label{eq:fourier diagonalization}
\end{equation}
where we used \( F F^* = \mathrm{Id} \), \( \bar{z}z = |z|^2 \) and \( |\,\cdot\,| \) denotes the complex modulus acting element-wise on its argument.
We assume that the spectra of \( k_j \) have disjoint support, i.e.
\begin{equation}
	\Gamma_i \cap \Gamma_j = \emptyset\ \text{ if }\ i\neq j,
	\label{eq:disjoint}
\end{equation}
where \( \Gamma_j = \operatorname{supp} Fk_j \).
Note that, in analogy to the pair-wise orthogonality of the filters in the patch model~\cref{eq:ortho}, from this immediately follows that \( \langle Fk_j, Fk_i \rangle = 0 \) when \( i \neq j \).
In addition, we assume that the magnitude is constant over the support, i.e.
\begin{equation}
	|Fk_j| = \xi_j \mathds{1}_{\Gamma_j},
	\label{eq:constant}
\end{equation}
where \( \mathds{1}_A \) is the characteristic function of the set \( A \).
\begin{theorem}[Convolutional Diffusion]
	Under assumptions~\eqref{eq:disjoint} and~\eqref{eq:constant}, \( f(\argm, t) \) satisfies the diffusion equation \( (\partial_t - \Delta_1) f(\argm, t) = 0 \) if \( \bar{\sigma}_j^2(t) = \sigma_0^2 + \xi_j^2 2t \).
\end{theorem}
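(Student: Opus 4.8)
The plan is to follow the template of \cref{th:diff local} almost verbatim, with the pairwise orthogonality~\cref{eq:ortho} replaced by the spectral disjointness~\cref{eq:disjoint}. First I would record that, by the expansion~\cref{eq:conv gmm}--\cref{eq:expanded}, the model \( f(\argm,0) \) is a homoscedastic \gls{gmm} with precision \( \Sigma^{-1}=\sigma_0^{-2}\sum_{j=1}^J K_j^\top K_j \), which by~\cref{eq:fourier diagonalization} equals \( F^*\diag\bigl(\sigma_0^{-2}\sum_j|Fk_j|^2\bigr)F \). Under~\cref{eq:disjoint,eq:constant} we have \( |Fk_j|^2=\xi_j^2\mathds{1}_{\Gamma_j} \) with pairwise disjoint supports, so each \( K_j^\top K_j=\xi_j^2 P_j \), where \( P_j:=F^*\diag(\mathds{1}_{\Gamma_j})F \) is the orthogonal projector onto the span \( V_j \) of the Fourier modes on \( \Gamma_j \), and the \( V_j \) are mutually orthogonal. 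Consequently \( \Sigma^{-1}=\sum_j(\xi_j^2/\sigma_0^2)P_j \) is already in eigendecomposed form: it acts as \( \xi_j^2/\sigma_0^2 \) on \( V_j \) and as \( 0 \) on \( V^\perp:=\bigl(\bigoplus_j V_j\bigr)^\perp \). As with the patch model when \( J<a \) (see the remark after~\cref{cor:marginal}), the precision is singular on \( V^\perp \), so I would restrict the analysis to \( V:=\bigoplus_j V_j \) and assume the diffusion does not move density out of \( V \).

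Next I would invoke the elementary \gls{gmm} facts recalled in~\cref{ssec:diffusion empirical bayes}: since the Green's function of~\cref{eq:diff} is \( G_{0,2t\Id} \) and convolving two Gaussians adds covariances, diffusing a \gls{gmm} leaves its weights and means fixed and sends each component covariance \( \Sigma\mapsto\Sigma+2t\Id \). On \( V \), \( \Sigma \) is diagonal in the eigenbasis above with eigenvalue \( \sigma_0^2/\xi_j^2 \) on \( V_j \), so under diffusion this becomes \( \sigma_0^2/\xi_j^2+2t=(\sigma_0^2+\xi_j^2 2t)/\xi_j^2 \); equivalently, the diagonalized precision on \( V_j \) moves from \( \xi_j^2/\sigma_0^2 \) to \( \xi_j^2/(\sigma_0^2+\xi_j^2 2t) \). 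Independently, rerunning the expansion~\cref{eq:conv gmm}--\cref{eq:expanded} for the time-\( t \) model, in which the \( j \)-th expert now has variance \( \bar\sigma_j^2(t) \), produces a homoscedastic \gls{gmm} with precision \( \Sigma(t)^{-1}=\sum_j\bar\sigma_j^{-2}(t)K_j^\top K_j \), which by the same diagonalization acts as \( \xi_j^2/\bar\sigma_j^2(t) \) on \( V_j \). Matching the two expressions forces \( \bar\sigma_j^2(t)=\sigma_0^2+\xi_j^2 2t \), which is the claim; I would also quickly check that the component means are inert, since in Fourier coordinates the \( V_j \)-part of a component mean is \( \bigl(\bar\sigma_j^2(t)/\xi_j^2\bigr)\cdot\bar\sigma_j^{-2}(t)\,\overline{Fk_j}\odot F\mu=\xi_j^{-2}\,\overline{Fk_j}\odot F\mu \), independent of \( t \). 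Together these show \( f(\argm,t) \) is exactly the heat flow of \( f(\argm,0) \) and hence solves \( (\partial_t-\Delta_1)f(\argm,t)=0 \).

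The step I expect to be the main obstacle is handling the degeneracy on \( V^\perp \): unless \( \bigcup_j\Gamma_j \) exhausts the frequency grid (which generically it does not --- there may be a missing DC mode or spectral gaps), the precision is singular and the model is an improper density along those directions, so the argument only goes through after restricting to \( V \) and \emph{postulating}, exactly as the paper does for the patch model, that the heat flow leaves \( V \) invariant. A minor bookkeeping nuisance is keeping the Fourier diagonalization compatible with the Hermitian symmetry of \( Fk_j \) for real filters (so the \( \Gamma_j \) should be taken conjugate-symmetric), but this does not affect the eigenvalue computation. Apart from these points, the proof is a transcription of~\cref{th:diff local} with~\cref{eq:ortho} replaced by~\cref{eq:disjoint}.
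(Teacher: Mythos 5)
Your proposal is correct and follows essentially the same route as the paper's proof: Fourier diagonalization of the precision via~\cref{eq:fourier diagonalization}, then using~\cref{eq:disjoint} and~\cref{eq:constant} to read off how each eigenvalue evolves under \( \Sigma \mapsto \Sigma + 2t\Id \). Your treatment is somewhat more careful than the paper's terse version (explicit projectors \( P_j \), the restriction to \( V = \bigoplus_j V_j \), and the check that the means are inert), but these are refinements of the same argument rather than a different approach.
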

\begin{proof}
	The proof is in analogy to~\cref{th:diff local}.
	By~\cref{eq:fourier diagonalization}, under diffusion, \(  F^*\diag\bigl(\sum_{j=1}^J \frac{\sigma^2}{|Fk_j|^2} \bigr) F \mapsto F^*\diag\left(\frac{\sigma^2 + 2t\sum_{j=1}^J|Fk_j|^2 }{\sum_{j=1}^J |Fk_j|^2}\right) F \).
	Using~\cref{eq:disjoint} the inner sum decomposes as
	\begin{equation}
		\frac{\sigma_0^2 + 2t \sum_{j=1}^J |Fk_j|^2}{\sum_{j=1}^J |Fk_j|^2} = \sum_{j=1}^J \frac{\sigma_0^2 + 2t |Fk_j|^2}{|Fk_j|^2}
	\end{equation}
	and with~\cref{eq:constant} the numerator reduces to \( \sigma_0^2 + 2t\xi_j^2 \).
\end{proof}
\section{Numerical Results}
\subsection{Numerical Optimization}%
\label{ssec:implementation details}
For all experiments, \( \psi_j \) is a \( L = \num{125} \) component \gls{gmm}, with equidistant means \( \mu_l\) in the interval \( [-\gamma, \gamma] \), where we chose \( \gamma = 1 \).
To support the uniform discretization of the means, the shared standard deviation of the experts is \( \sigma_0 = \frac{2\gamma}{L - 1} \).
Assuming zero-mean filters of size \( b \times b \), we use \( J = b^2 - 1 \) filters.
Each component of the initial filters is independently drawn from a zero-mean Gaussian distribution with standard deviation \( b^{-1} \).
We avoid simplex projections by replacing \( w_j \) with learnable parameters \( \zeta_j \), from which \( w_j \) are computed using a soft-argmax \(w_{jl} = \frac{\exp{\zeta_{jl}}}{\sum_{l=1}^L \exp \zeta_{jl}},\) and initialize \( \zeta_{jl} = \frac{0.1\sqrt{\alpha}}{1 + \alpha\mu_l^2} \), where \( \alpha = \num{1000} \).

For the numerical experiments, \( f_X \) reflects the distribution of rotated and flipped \( b \times b \) patches from the \num{400} gray-scale images in the BSDS 500~\cite{martin_database_2001} training and test set, with each pixel in the interval \( [0, 1] \).
We optimize the parameters \( \theta = \{ (k_j, \zeta_j) \}_{j=1}^J \) in~\cref{eq:score} using the iPALM algorithm~\cite{pock_inertial_2016} with respect to a randomly chosen batch of size \( \num{3200} \) for \( \num{100000} \) steps.
We approximate the infinite-time diffusion process by uniformly drawing \( \sqrt{2t} \) from the interval \( [\num{0}, \num{0.4}] \).
We detail how we ensure the orthogonality of the filters during the iterations of iPALM in the next section.
\subsubsection{Enforcing Orthogonality}
Let \( K = [k_1, k_2, \dotsc, k_J] \in \R^{a \times J} \) denote the matrix obtained by horizontally stacking the filters.
We are interested in finding
\begin{equation}
	\operatorname{proj}_{\mathcal{O}}(K) = \argmin_{M \in \mathcal{O}} \norm{M - K}_F^2
\end{equation}
where \( \mathcal{O} = \{ X \in \R^{a \times J} : X^\top X = D^2 \} \), \( D = \diag(\lambda_1,\lambda_2,\dotsc,\lambda_J) \) is diagonal, and \( \norm{\,\cdot\,}_F \) is the Frobenius norm.
Since \( \operatorname{proj}_{\mathcal{O}}(K)^\top \operatorname{proj}_{\mathcal{O}}(K) = D^2 \) we can represent it as \( \operatorname{proj}_{\mathcal{O}}(K) = OD \) with \( O \) semi-unitary (\( O^\top O = \mathrm{Id} \)).
Other than positivity, we do not place any restrictions on \( \lambda_1, \dotsc, \lambda_J \), as these are related to the precision in our model.
Thus, we rewrite the objective
\begin{equation}
	\begin{aligned}
		&\operatorname{proj}_{\mathcal{O}}(K) = \argmin_{\substack{O^\top O = \mathrm{Id}_J \\ D = \diag(\lambda_1,\dotsc,\lambda_J)}} \\ &\big\{ \norm{OD - K}_F^2  = \norm{K}_F^2 - 2 \langle K, OD \rangle_F + \norm{D}_F^2 \big\}
	\end{aligned}
\end{equation}
where \( \langle \,\cdot\,, \,\cdot\, \rangle_F \) is the Frobenius inner product.

We propose the following alternating minimization scheme for finding \( O \) and \( D \).
The solution for the reduced sub-problem in \( O \) can be computed by setting \( O = U \), using the polar decomposition of \( DK^\top = UP \), where \( U \in \R^{J \times a}\) is semi-unitary (\( U^\top U = \mathrm{Id}_{a} \)) and \( P = P^\top \in \mathbb{S}^a_+ \).
The sub-problem in \( D \) is solved by setting \( D_{i,i} = \bigl((O^\top K)_{i,i}\bigr)_{+} \).
The algorithm is summarized in~\cref{alg:orthogonalizing}, where we have empirically observed fast convergence; \( B = 3 \) steps already yielded satisfactory results.
A theoretical analysis of the algorithm is presented in the supplemental material.
\begin{algorithm}
	\DontPrintSemicolon
	\SetKwInOut{Output}{Output}
	\SetKwInOut{Input}{Input}
	\Input{\( K = [k_1, \dotsc, k_J] \in \R^{a \times J} \), \( B \in \mathbb{N} \), \( D^{(1)} = \mathrm{Id}_J \)}
	\Output{\( O^{(B)}D^{(B)} = \proj_{\mathcal{O}}(K) \)}
	\For{\( b \in 1, \dotsc, B - 1 \)}{
		\( U^{(b)}P^{(b)} = D^{(b)}K^\top \)\tcp*{Polar decomposition}
		\( O^{(b+1)} = U^{(b)} \)\;
		\( D^{(b+1)}_{i,i} = \bigl(((O^{(b+1)})^\top K)_{i, i} \bigr)_+ \)\;
	}
	\caption{%
		Algorithm for orthogonalizing a set of filters \( K \).
	}%
	\label{alg:orthogonalizing}
\end{algorithm}

To visually evaluate whether our learned model matches the empirical marginal densities for any diffusion time \( t \), we plot them in~\cref{fig:patch results}.
At the top, the learned \( 7 \times 7 \) orthogonal filters \( k_j \) are depicted, the associated learned potential functions \( -\log \psi_j \) are shown below.
Indeed, they match the empirical marginal responses
\begin{equation}
	h_j(z, t) = -\log \mathbb{E}_{p \sim f_X} \delta(z - \langle k_j, p \rangle)
\end{equation}
visualized at the bottom almost perfectly even at the low-density tails.
In accordance to~\cref{th:diff local}, the potentials barely change with \( t \) when \( \norm{k_j} \) is small.
Conversely, when \( \norm{k_j} \) is large, the change is much more drastic.
We observe the same for \( 15 \times 15 \) filters, as shown in the supplementary material.
\begin{figure*}
	\newlength{\hheight}
	\settoheight{\hheight}{A}
	\centering
	\includegraphics[trim=4.5cm .7cm 4cm .6cm, clip, width=\textwidth]{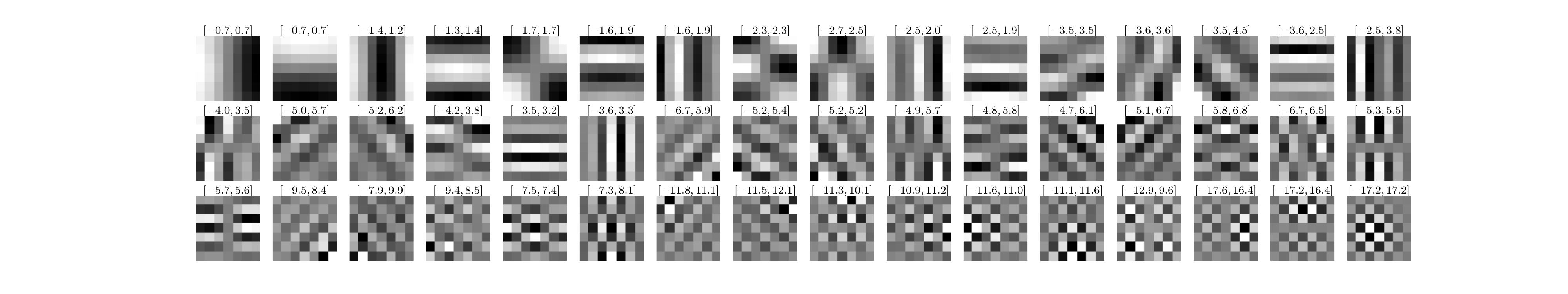}
	\includegraphics[trim=4.5cm .3cm 4cm .7cm, clip, width=\textwidth]{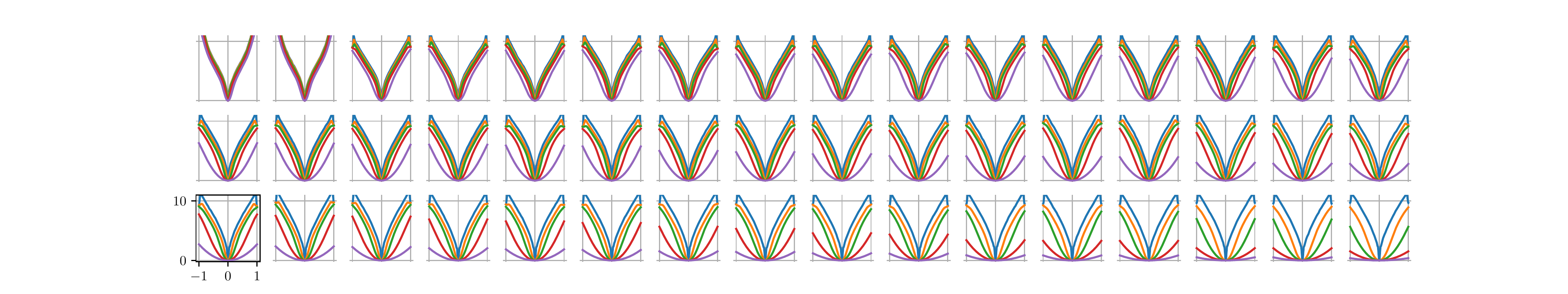}
	\rule[3mm]{\textwidth}{.3mm}\vspace*{-3mm}
	\vspace*{-3mm}
	\includegraphics[trim=4.5cm .3cm 4cm .7cm, clip, width=\textwidth]{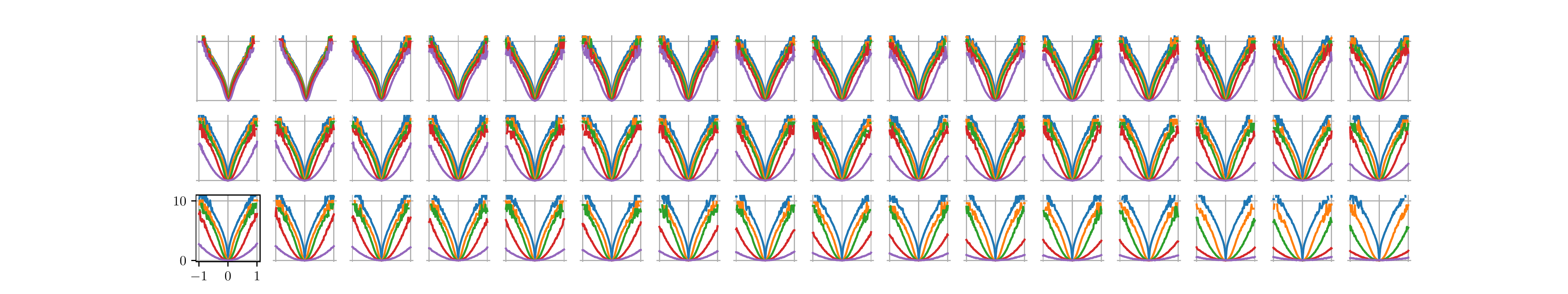}
	\vspace*{-3mm}
	\hspace*{8.2cm}{\resizebox{4cm}{!}{\(\sigma(t)=\)
		\foreach \ccolor\ssigma in {mplblue/0, mplorange/0.025, mplgreen/0.05, mplred/0.1, mplpurple/0.2}
		{
			\tikz[baseline=\hheight]{\draw[\ccolor, ultra thick](0,0.3) -- (.5,.3);}\num{\ssigma}
		}
	}}
	\caption{%
		Learned filters \( k_j \) (top, the intervals show the values of black and white respectively, amplified by a factor of \num{10}) and potential functions \( -\log \psi_j \) (middle).
		On the bottom the empirical marginal filter response histograms are drawn.
	}%
	\label{fig:patch results}
\end{figure*}
\subsection{Sampling}
A direct consequence of~\cref{cor:marginal} is that our model admits a simple sampling procedure:
The statistical independence of the components allows to draw random patches by \(Y_t = \sum_{j=1}^J \frac{k_j}{\norm{k_j}^2} Z_{j, t},\) where \( Z_{j, t} \) is sampled from the one-dimensional \gls{gmm} \( \psi_j \).
The samples in~\cref{fig:patch generation results} indicate a good match over a wide range of \( t \).
However, for small \( t \) the generated patches appear slightly noisy, which is due to a over-smooth approximation of the sharply peaked marginals around \( 0 \).
\begin{figure*}
	\centering
	\def\wwidth{3.6cm}
	\begin{tikzpicture}
		\foreach [count=\isigma] \ssigma/\llabel/\ccolor in {0.000/0/mplblue, 0.025/0.025/mplorange, 0.050/0.05/mplgreen, 0.100/0.1/mplred, 0.200/0.2/mplpurple}{
			\foreach [count=\iwhich] \which in {true, analytical}{
				\node at (\isigma*\wwidth+\isigma*1, -\iwhich*\wwidth/2.3-\iwhich*5) {\includegraphics[width=\wwidth]{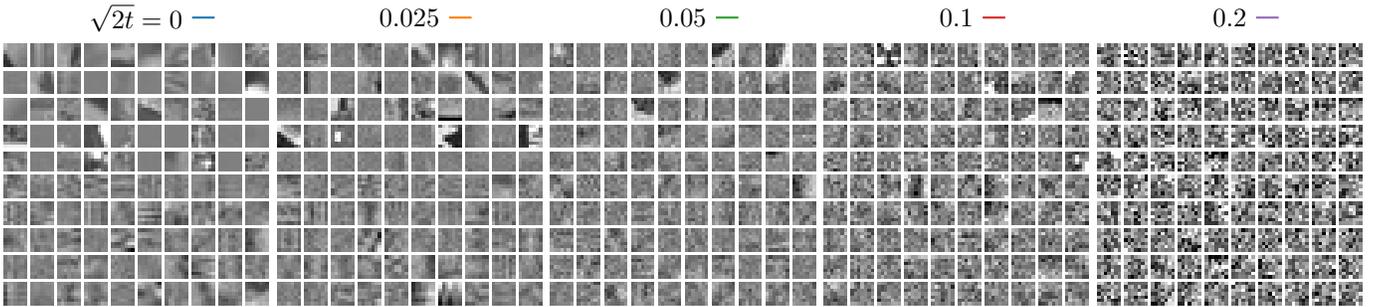}};
			}
			\node (anno\isigma) at (\isigma*\wwidth+\isigma, -.52){\ifthenelse{\isigma=1}{\( \sqrt{2t} = \llabel \)}{\( \llabel \)}};
			\draw [\ccolor, thick] (anno\isigma.east) -- ++(0.3, 0);
		}
	\end{tikzpicture}
	\caption{%
		Samples from the random variable \( Y_t \) (top) and generated patches (bottom).
	}%
	\label{fig:patch generation results}
\end{figure*}
\subsection{Image Denoising}
For the denoising experiments, we use the \num{68} test images from~\cite{martin_database_2001}.
To exploit our prior for denoising, we employ empirical Bayes-patch averaging (EB-PA) and the half-quadratic splitting (HQS) algorithm~\cite{zoran_learning_2011}.
In HQS, we approximate the solution to the inner MAP problem with one empirical Bayes step on \( \tilde{f}_\theta(\argm, t) \), and set \( \beta = \frac{1}{2t} \) for using a predefined schedule for \( t \).
The quantitative analysis in~\cref{tab:denosing} shows competitive performance, especially given the relatively small number of parameters in our model.
\begin{table*}
	\centering
	\caption{%
		Quantitative denoising results.
		Reference numbers are taken from~\cite{zoran_learning_2011}.
	}%
	\label{tab:denosing}
	\begin{tabular}{S[table-format=3]*{2}{S[table-format=2.2,round-mode=places,round-precision=2]}S[table-column-width=1.1cm,table-format=2.2,round-mode=places,round-precision=2]|*{2}{S[table-column-width=1.1cm,table-format=2.2,round-mode=places,round-precision=2]}|S[table-column-width=1.1cm,table-format=2.2,round-mode=places,round-precision=2]}
		\toprule
		{\multirow{2}{*}{\( 255\sigma \)}} & {\multirow{2}{*}{FoE~\cite{RoBl09}}} & {\multirow{2}{*}{GMM-EPLL~\cite{zoran_learning_2011}}} & \multicolumn{4}{c}{GMDM (\( b = 7 \mid b = 15\))} \\\cmidrule(l{1.5em}r{1.5em}){4-7}
						   & & & \multicolumn{2}{c}{EB-PA} & \multicolumn{2}{c}{HQS}  \\\midrule
		15 & 30.18 & 31.21 & 30.002985 & 30.33518791 & 30.37267303 & 30.68688393 \\
		25 & 27.77 & 28.72 & 27.47083664 & 27.80785942 & 28.12999535 & 28.38670158\\
		50 & 23.29 & 25.72 & 24.61125565 & 24.96177673 & 25.3236866 & 25.50110435 \\
		100 & 16.68 & 23.19 & 22.14141464 & 22.73509979 & 23.06726837 & 23.12487411\\\midrule
		{\#Params} & {648} & {819200} & {8352} & {78400} & {8352} & {78400} \\\bottomrule
	\end{tabular}
\end{table*}
\begin{figure}
	\centering
	\resizebox{\columnwidth}{!}{%
	\begin{tikzpicture}
		\def\wwidth{3cm}
		\foreach [count=\ii] \wwhich/\yoff in {noisy/0, estimate/0, denoised/-2.1, diff/-2.1}{
			\pgfmathsetmacro{\xoff}{mod(\ii-1,2)*3.6+mod(\ii-1,2)*.5}
			\begin{scope}[spy using outlines={rectangle, magnification=3, width=1.cm, height=2cm, connect spies}]
				\node at (\xoff, \yoff) {\includegraphics[width=\wwidth]{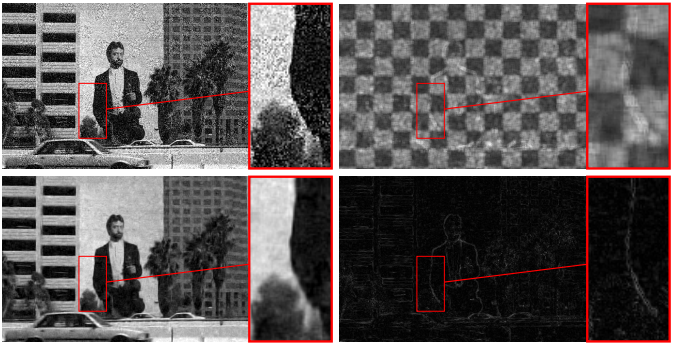}};
				\spy [red] on (\xoff-.4, \yoff-.3) in node [left] at (\xoff+2.5, \yoff);
			\end{scope}
		}
\end{tikzpicture}}
	\caption{%
		Blind denoising: Image corrupted by heteroscedastic Gaussian noise in a checkerboard-pattern (standard deviation \num{0.1} and \num{0.2}), noise estimate, EB-PA denoising result, and the difference to the reference image.
	}%
	\label{fig:blind denosing}
\end{figure}
\subsection{Noise Estimation and Blind Image Denoising}
The construction of our model allows us to interpret \( \tilde{f}_\theta(\argm, t) \) as a time-conditional likelihood density.
\cref{fig:blind denosing} shows that we can utilize our model for heteroscedastic blind denoising, by estimating pixel-wise noise and using EB-PA.
\cref{fig:noise estimation} shows the expected negative-log density over a range of \( \sigma \) and \( \sqrt{2t} \).
For visualization purposes, we normalized the negative-log density to have a minimum of zero over \( t \): \( l_\theta(x, t) = -\log\tilde{f}_\theta(x, t) - (\max_t \log \tilde{f}_\theta(x, t)) \).
The estimate \( \sigma \mapsto \argmin_t \mathbb{E}_{p \sim f_X, \eta \sim \mathcal{N}(0, \mathrm{Id})} l_\theta(p + \sigma\eta, t) \) is a very good match to \( \sigma \mapsto \sqrt{2t} \).
\begin{figure}
	\centering
	\begin{tikzpicture}
		\node at (0, 0) {\includegraphics[trim=.9cm 2cm 1cm 2cm, clip, width=.95\columnwidth]{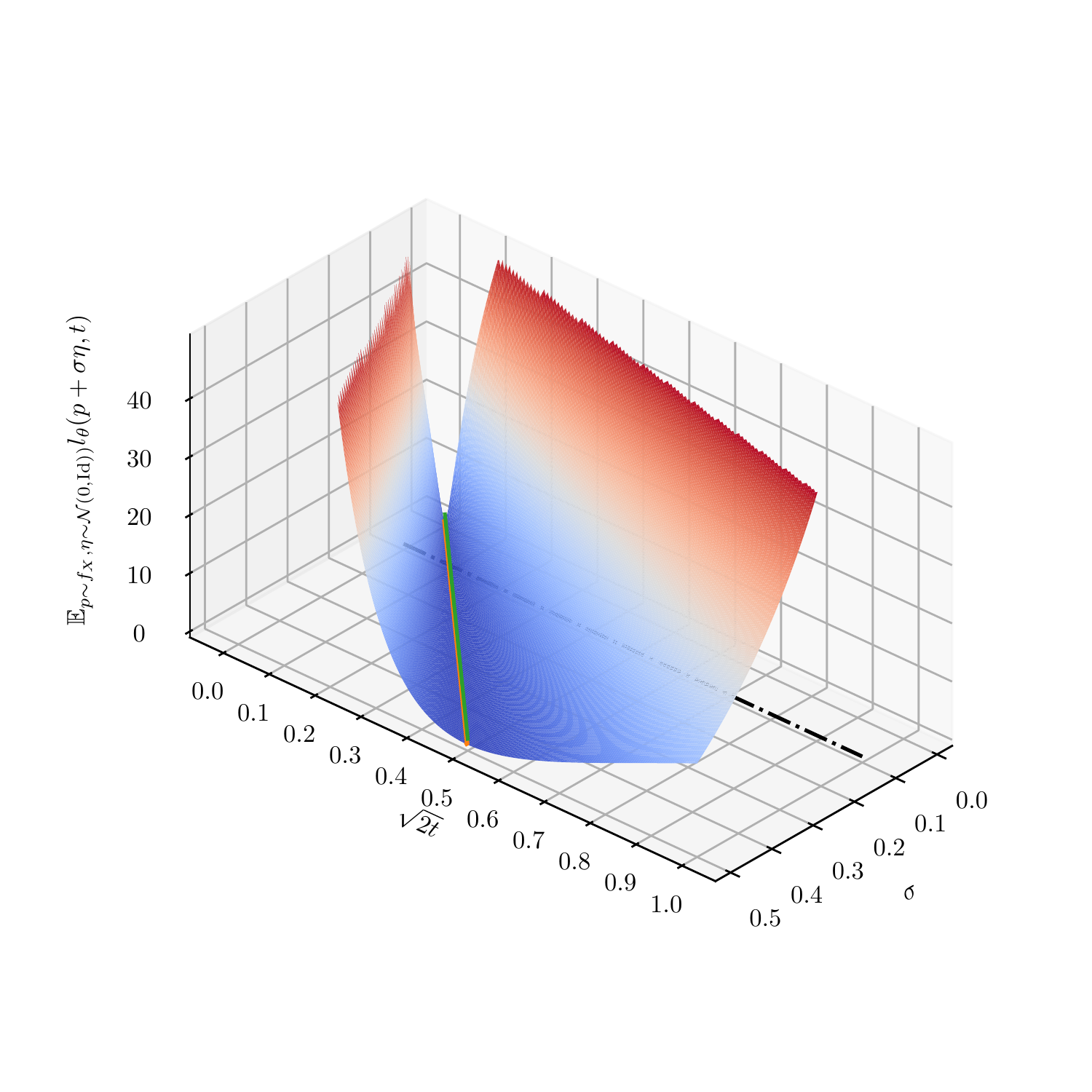}};
		\node at (0, -6.5) {\includegraphics[trim=.5cm .1cm 1cm 1.2cm, clip, width=.95\columnwidth]{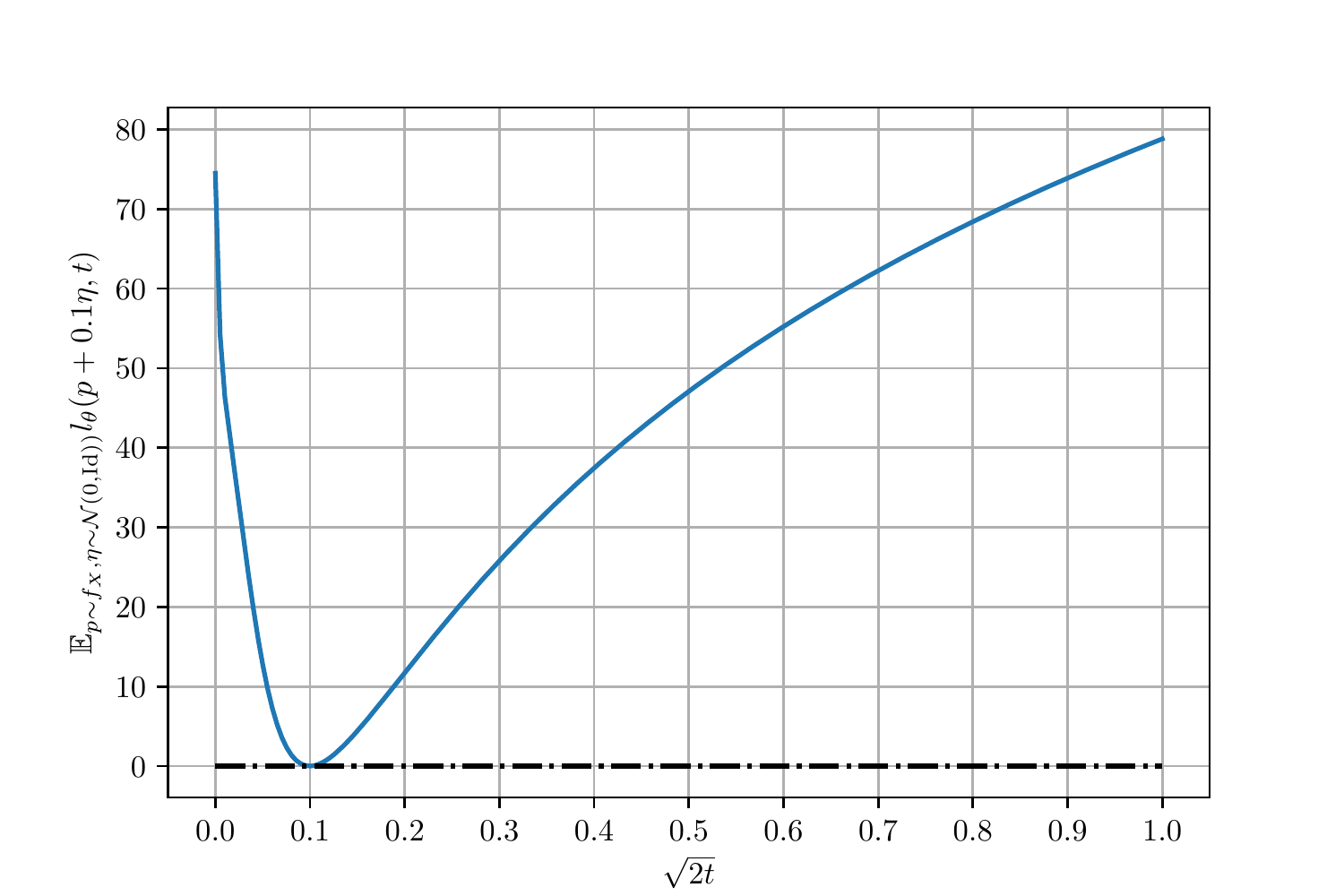}};
	\end{tikzpicture}
	\caption{%
		Top:
		Expected (zero-min-normalized) negative-log density along with the noise estimate \protect\tikz[baseline=\hheight]\protect\draw[mplorange, thick] (0, 0.1) -- ++(0.4, 0); \( \sigma \mapsto \argmin_t \mathbb{E}_{p \sim f_X, \eta \sim \mathcal{N}(0, \mathrm{Id})}l_\theta(p+\sigma\eta, t) \) and \protect\tikz[baseline=\hheight]\protect\draw [mplgreen, thick] (0, 0.1) -- ++(0.4, 0); \( \sigma \mapsto \sqrt{2t} \).
		Bottom: The slice at \( \sigma = 0.1 \).
	}%
	\label{fig:noise estimation}
\end{figure}

\section{Conclusion}
In this paper, we introduced \glspl{gmdm} as products of \glspl{gmm} on filter responses that allow for an explicit solution of the diffusion equation of the associated density.
Our explicit formulation enables learning of product/field-of-experts-like image priors simultaneously for all diffusion times using denoising score matching.
Our numerical results demonstrated that \glspl{gmdm} capture the statistics of natural image patches well for any noise level and hence are suitable for heteroscedastic (blind) image denoising.
In future work, we plan to extend the numerical evaluation to the convolutional model and apply our framework to challenging inverse problems in medical imaging.
\appendix
\section{Preliminary Theoretical Analysis of Projection Algorithm}
The problem is given $K\in\R^{m\times n}$, $n\le m$, find $A\in \R^{m\times n}$ such
that $A^T A$ is a diagonal $n\times n$ matrix which minimizes $\|A-K\|_2$
We decompose $K=U^K\Sigma^K (V^K)^T$ and $A=U\Sigma V^T$, with
$U,V$ in $SO(m)$, $SO(n)$ respectively and $\Sigma$ $m\times n$ matrices
with $\Sigma_{i,j}=0$ if $i\neq j$, and
$\Sigma_{i,j}=\sigma_i\ge 0$ for $i=1,\dots,n$
(hence if $m=n$ $\Sigma$ is diagonal,
if $m>n$ it is made of a $n\times n$ diagonal matrix ``above'' a $(m-n)\times n$
null matrix).

The constraints reads $A^TA = V \Sigma^2  V^T$ is diagonal, so that $A$ should
have a SVD representation with $V=I$, we assume now it has the form $U\Sigma$.

Then, $\|A-K\| = \|U\Sigma - U^K\Sigma^K (V^K)^T\| =
\|(U^K)^TU\Sigma - \Sigma^K (V^K)^T\|$ so without loss of generality, we can
assume that $K$ has the form $\Sigma^K (V^K)^T$, and we consider the problem:
\begin{equation}\label{eq:equiv}
  \min_{U,\Sigma} \|U\Sigma - \tilde K\|
\end{equation}
where $\tilde K = \Sigma^K (V^K)^T$.
Observe that
\[
  \|U\Sigma - \tilde K\|^2 = \|\Sigma\|^2 - 2(U\Sigma)\cdot \tilde K + \|\Sigma^K\|^2
\]
The above objective is easily minimized wr $\Sigma$ or $U$.
For $\Sigma$, since $(U\Sigma)\cdot\tilde K = \Sigma\cdot (U^T\tilde K)$ the solution is
\[
  \sigma_i = (U^T\tilde K)_{i,i} = \sum_{k=1}^n u_{k,i} \sigma^K_k v^K_{i,k}
\]
and the objective becomes (the first term below is constant and could be removed):
\[
  \min_{U} \|\Sigma^K\|^2 - \sum_{i=1}^n (U^T\tilde K)_{i,i}^2.
\]

\paragraph{Remark: Frank-Wolfe type method} Letting $f(U):=
\|\Sigma^K\|^2 - \sum_{i=1}^n (U^T\tilde K)_{i,i}^2$, one
has $\nabla f(U)\cdot M = -2\sum_{j=1}^n (U^T\tilde K)_{j,j} (\sum_{i=1}^m M_{i,j}\tilde K_{i,j})$, that is
\[
  (\nabla f(U))_{i,j} = -2 (U^T\tilde K)_{j,j} \tilde K_{i,j} = -2\sigma_j\tilde K_{i,j}.
\]
for $j\le n$, and $0$ for $j> n$.
This means that the alternating minimization algorithm can be viewed as
a Frank-Wolfe type method on $f$: starting from $U^0=I$, one
finds $U^{n+1}$ by minimizing $\nabla f(U^n)\cdot U$ for $U\in SO(m)$.
A stationary point is clearly a local minimum, yet as $f$ is concave,
it is not clear that it is a global minimum.

On the other hand, minimizing wr $U$ means solving
\[
  \max_{U^T U=I} U  \cdot (\tilde K\Sigma^T) 
\]
since $ (U\Sigma)\cdot \tilde K = \trace U\Sigma \tilde K^T = \trace U (\tilde K\Sigma^T)^T=
U\cdot (\tilde K\Sigma^T)$.

Now, given $M$ a $m\times m$ matrix, $\max_{O^T O} O\cdot M= \|M\|_1$ is the sum of the
singular values. Indeed if $M=U\Sigma V^T$ with $U,V\in SO(m)$, $O\cdot M = O\cdot (U\Sigma V^T)
= (U^TO V)\odot \Sigma$ and the max is reached for $O= UV^T$, with value $\sum_i \sigma_i$.

Hence the problem above is solved for $U = U^\Sigma (V^\Sigma)^T$ where $U^\Sigma,V^\Sigma$
are the $m\times m$ orthogonal matrices arising in the SVD representation of $\tilde K\Sigma^T$.
Then, the objective becomes:
\[
  \min_{\Sigma}
  \|\Sigma\|^2 - 2\|\tilde K\Sigma^T\|_1 + \|\Sigma^K\|^2
\]

Now we show that in fact we can work only with smaller, $n\times n$ matrices,
using the  particular form of $\tilde K$. Indeed,
$(\tilde K\Sigma^T)_{i,j} = \sum_{l=1}^n\sum_{k=1}^n\sigma^K_i \delta_{i,l}(V^K)_{k,l} \sigma_k \delta_{j,k}$
is $0$ if $i>n$ or $j>n$, and $\sigma^K_i (v^K)_{j,i}\sigma_j$ else.
Denoting $(\tilde K\Sigma^T)_n$ this reduced $n\times n$ matrix it is obvious that
it has the same singular values as $\tilde K\Sigma^T$ and hence, the objective
can be reduced to
\[
  \min_{(\sigma_i)_i} \sum_{i=1}^n \sigma_i^2  - 2 \|(\sigma_i \sigma^K_j v^K_{i,j})_{i,j=1}^n\|_1
  + \|\Sigma^K\|^2
\]
equivalently, we can replace the problem~\eqref{eq:equiv} with the same problem,
yet with smaller $n\times n$ matrices, and in particular $U\in SO(n)$ instead
of $SO(m)$.

Assume now we iterate by alternatively minimizing over $U$ and $\Sigma$ and
suppose we reach a fixed point. Then, consider
$f(U)=\|\Sigma^K\|^2-\sum_{i=1}^n (U^T\tilde K_n)^2_{i,i}$ where $\tilde K_n = (\sigma^K_i v^K_{i,j})_{i=1}^n$.
One has for small $t$:
\begin{equation}
	\begin{aligned}
	  &f(U+tM) = -\sum_i ((U^T+tM^T)\tilde K_n)^2_{i,i} = -\sum_i (U^T\tilde K_n)^2_{i,i} \\&-2t\sum_i(U^T\tilde K_n)_{i,i} \Big(\sum_j M_{j,i}(\tilde K_n)_{j,i}\Big) + o(t)
	\end{aligned}
\end{equation}
showing that $(\nabla f(U))_{i,j} = -2(U^T\tilde K_n)_{j,j} (\tilde K_n)_{i,j}= -2\sigma_j(\tilde K_n)_{i,j}=-2\tilde K_n\Sigma$ (where now $\Sigma=\Sigma^T$ since it is a $n\times n$ diagonal
matrix).

Now if $M$ is tangent to $SO(n)$, one has $(U+tM)^T(U+tM) = I + t (M^TU+U^TM) + t^2M^TM= I+o(t)$
so that $M^TU+U^TM=0$. Being $(U,\Sigma)$ a fixed point one has
$U= U^\Sigma (V^\Sigma)^T$ where $\tilde K_n\Sigma = U^\Sigma D(V^\Sigma)^T$ for some diagonal
matrix $D$. Then,
\begin{multline*}
  \nabla f(U)\cdot M = -2\trace(M^T U^\Sigma D(V^\Sigma)^T)
  \\= -2 \trace( M^TU V^\Sigma D(V^\Sigma)^T) = 2\trace (U^TM V^\Sigma D(V^\Sigma)^T)
  \\ = 2\trace (M V^\Sigma D (U^\Sigma)^T = 2\trace(M (\tilde K_n\Sigma)^T)= -\nabla f(U)\cdot M
\end{multline*}
so that $\nabla f(U)\cdot M=0$ and $U$ is a critical point of $f$ on $SO(n)$.
\section{Additional Experiments}
\begin{figure*}
	\includegraphics[trim=4.5cm .7cm 4cm .6cm, clip, width=\textwidth]{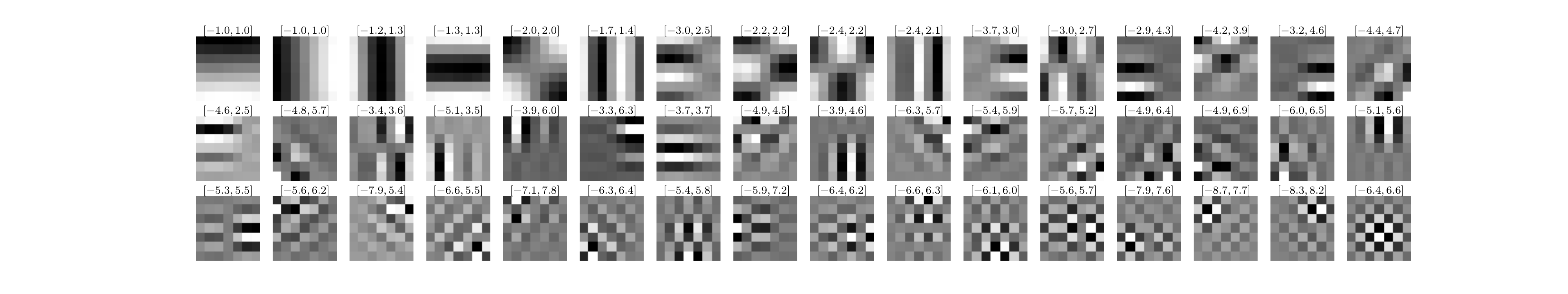}
	\includegraphics[trim=4.5cm .3cm 4cm .7cm, clip, width=\textwidth]{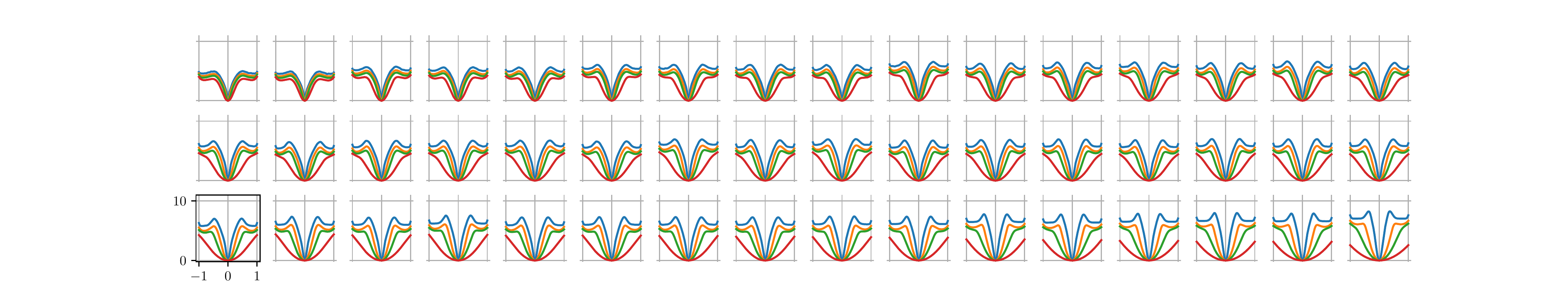}
	\vspace{-.7cm}
	\caption{%
		Learned filters \( k_j \) (top, the intervals show the values of black and white respectively, amplified by a factor of \num{10}) and potential functions \( -\log \psi_j \) (bottom) for a model trained only on \( \sigma = 0.02 \).
	}%
	\label{fig:patch results 5}
\end{figure*}
To emphasize the importance of the diffusion for learning, we learn a model solely on \( \sigma = 0.02 \) and compare the learned potential functions to the empirical marginal filter responses.
In detail,~\cref{fig:discrepancy} shows the normalized mean-squared error \( \mathrm{NMSE}_\kappa : \sqrt{2t} \mapsto \sum_{j=1}^J \fint_{\Omega_j(t, \kappa)} \frac{(\psi_j(z, \mathbf{w}_j, t) - h_j(z, t))^2}{\max_{z\in\Omega_j(t,\kappa)}h_j^2(z, t)}\ \mathrm{d}z \).
To avoid regions in which the empirical histogram is extremely unreliable, we define a \enquote{credible interval} \( \Omega_j(t, \kappa) = \{ [\alpha, \beta] \subset \R : \int_{-\infty}^\alpha h(\argm, t) = \int_\beta^\infty h(\argm, t) = \kappa \} \), and show different \( \kappa \in \{ \num{0.005}, \num{0.01}, \num{0.02} \} \).
The results show that training for multiple noise scales improves the performance.
In particular, larger diffusion times improve the performance also for (e.g.) \( \sigma = 0.02 \) \emph{especially in low-density regions}, which is apparent when \( \kappa \) is small.
When \( \kappa \) is large, we observe that the performance of the models only diverges when \( \sigma \) becomes relatively large.
\begin{figure}
	\centering
	\includegraphics[width=\columnwidth]{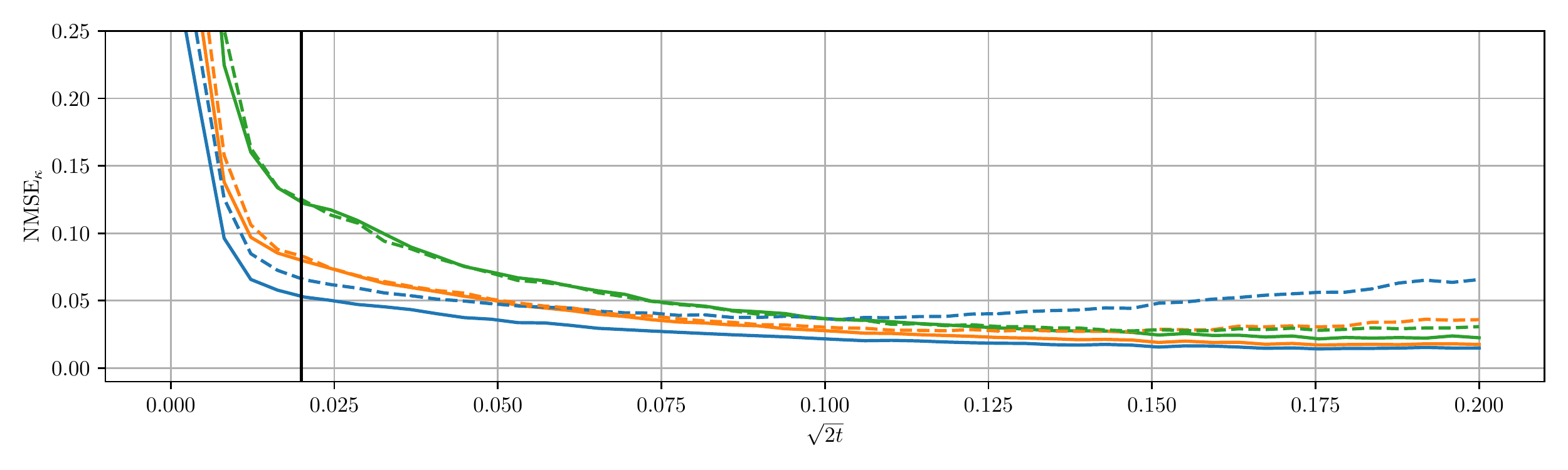}
	\scalebox{0.5}{
	\begin{tikzpicture}[overlay]
		\draw[mplblue, thick](0,5) -- ++(.5,0) node [right, black] {\(\kappa= \num{0.005}\)};
		\draw[mplorange, thick](0, 4.5) -- ++(.5,0) node [right, black] {\num{0.01}};
		\draw[mplgreen, thick](0, 4) -- ++(.5,0) node [right, black] {\num{0.02}};
	\end{tikzpicture}}
	\vspace{-.6cm}
	\caption{%
		Normalized mean-squared error between the learned potentials and the empirical marginals:
		For the solid lines, a model was trained on multiple noise scales, whereas only \( \sigma = \num{0.02} \) was used for the dashed lines.
	}%
	\label{fig:discrepancy}
\end{figure}

We show that our model is also scalable by learning on \( 15 \times 15 \) patches.
The subset containing \( 112 \) of the \( 15^2 - 1 \) filters and potential functions shown in \cref{fig:patch results 15} indicate that the results and discussion from the main paper also apply to much larger filter sizes.
The model shown in the figure was used in the denoising experiments in Table 1 in the main paper.
\begin{figure*}
	\centering
	\includegraphics[trim=4.5cm 1.5cm 4cm .6cm, clip, width=.86\textwidth]{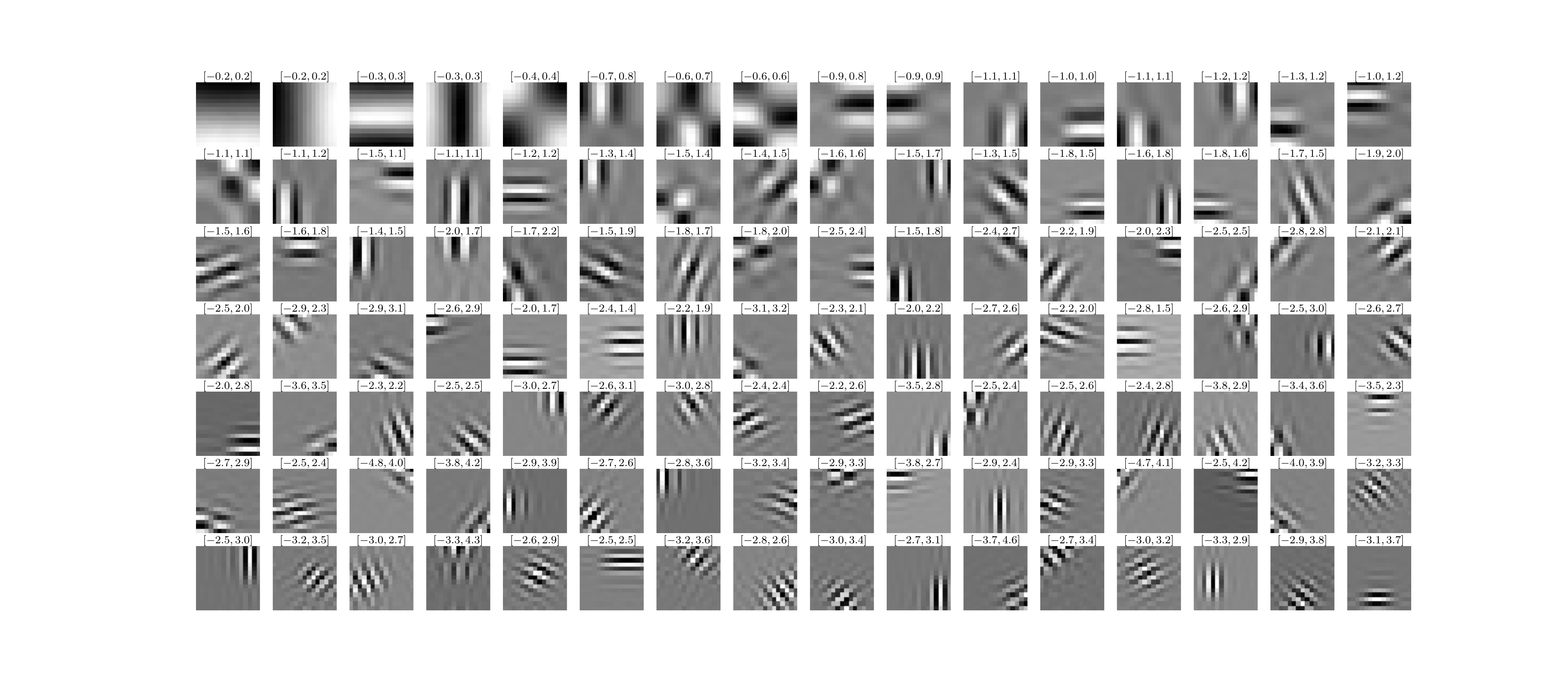}
	\includegraphics[trim=4.5cm 1.4cm 4cm 2cm, clip, width=.86\textwidth]{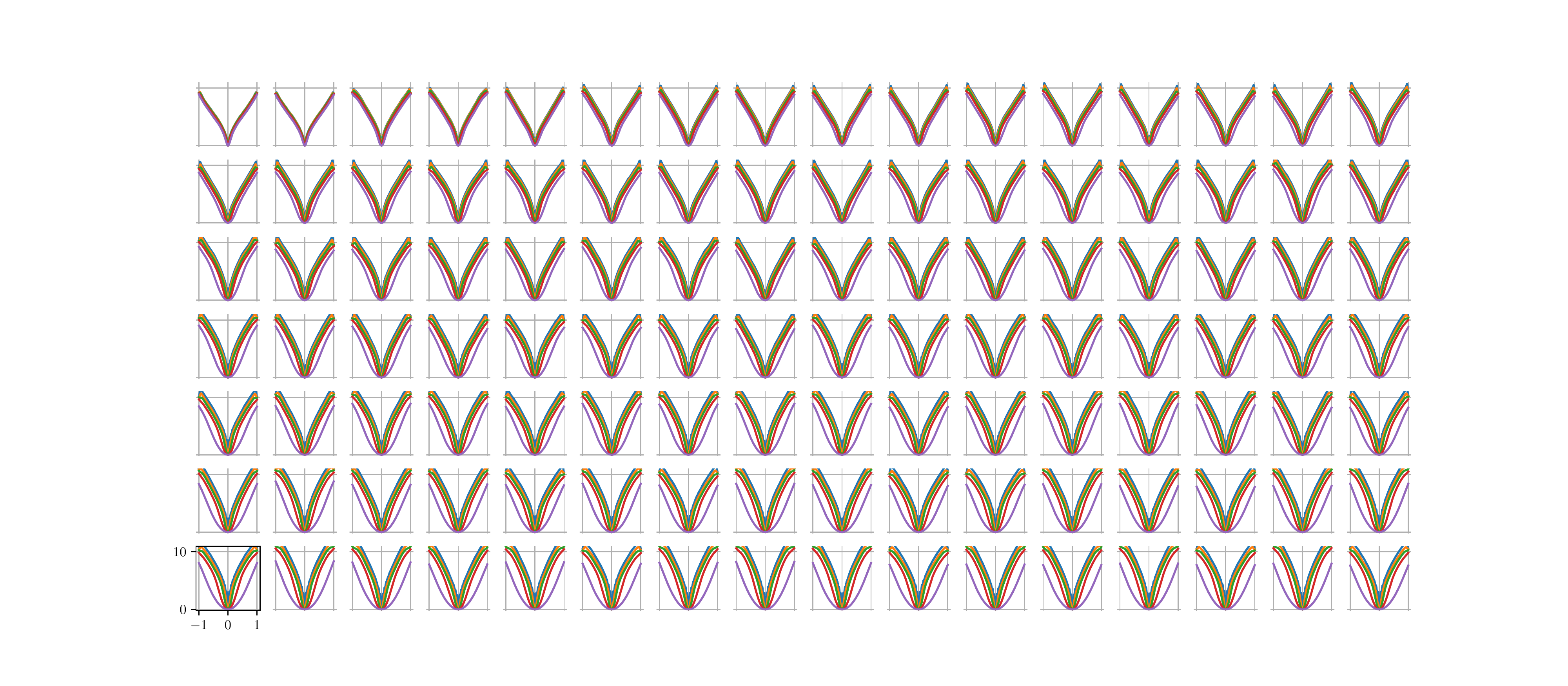}
	\rule[3mm]{.86\textwidth}{.3mm}
	\vspace*{-3mm}
	\includegraphics[trim=4.5cm 1.5cm 4cm 2cm, clip, width=.86\textwidth]{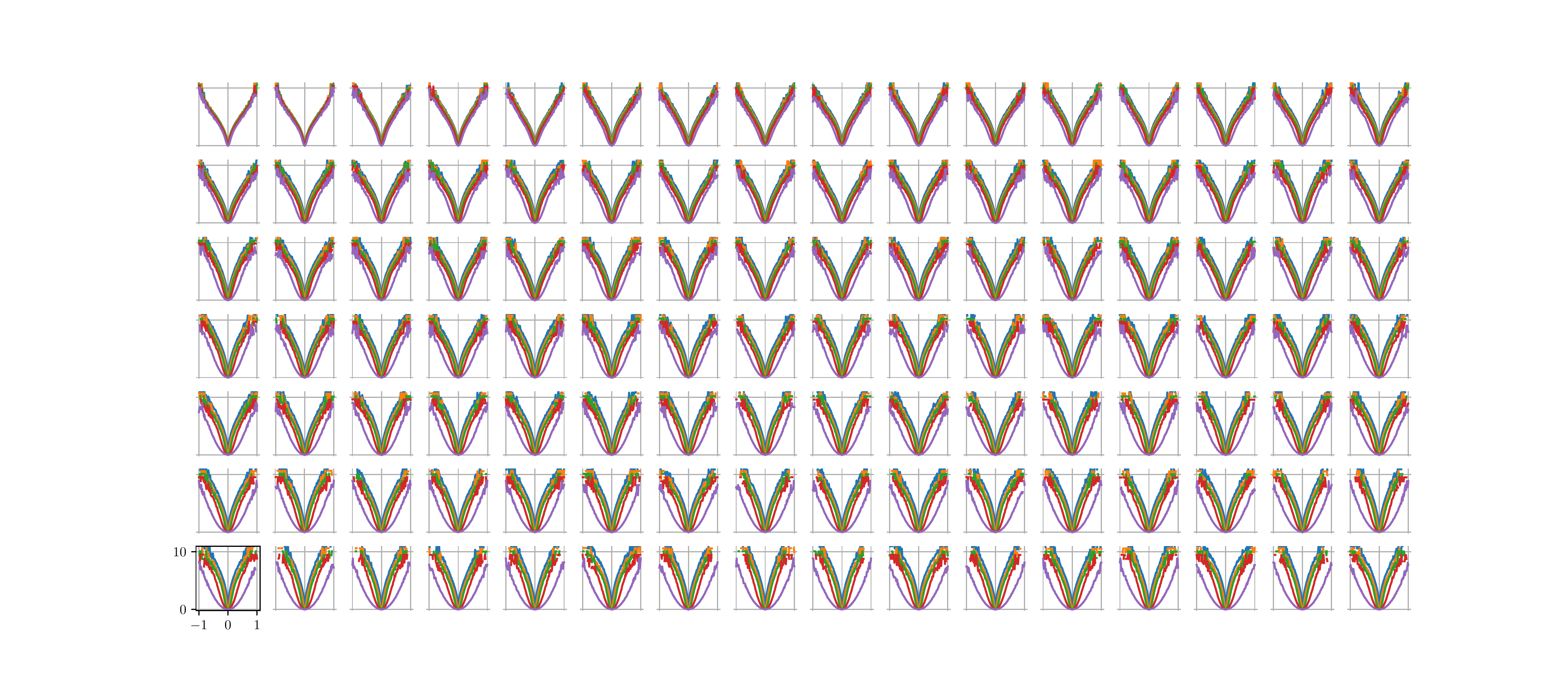}
	\caption{%
		Learned filters \( k_j \) (top, the intervals show the values of black and white respectively, amplified by a factor of \num{10}) and potential functions \( -\log \psi_j \) (bottom).
	}%
	\label{fig:patch results 15}
\end{figure*}
\bibliographystyle{splncs04}
\bibliography{bibliography}
\end{document}